\newtheorem{prop}{Proposition}
\theoremstyle{remark}
\newtheorem*{remark}{Remark}
\newlength{\textfloatsepsave} \setlength{\textfloatsepsave}{\textfloatsep}
\begin{document}

\title{Log-Euclidean Signatures for Intrinsic Distances Between Unaligned Datasets}

\author[1]{Tal Shnitzer \thanks{Corresponding author. Email: talsd@mit.edu}}
\author[2]{Mikhail Yurochkin}
\author[2]{Kristjan Greenewald}
\author[1]{Justin Solomon}

\affil[1]{MIT}
\affil[2]{MIT-IBM Watson AI Lab}

\date{}


\vskip 0.3in

\maketitle

\begin{abstract}
    The need for efficiently comparing and representing datasets with unknown alignment spans various fields, from model analysis and comparison in machine learning to trend discovery in collections of medical datasets.
    We use manifold learning to compare the intrinsic geometric structures of different datasets by comparing their diffusion operators, symmetric positive-definite (SPD) matrices that relate to approximations of the continuous Laplace-Beltrami operator from discrete samples.
    Existing methods typically assume known data alignment and compare such operators in a pointwise manner.
    Instead, we exploit the Riemannian geometry of SPD matrices to compare these operators and define a new theoretically-motivated distance based on a lower bound of the log-Euclidean metric.
    Our framework facilitates comparison of data manifolds expressed in datasets with different sizes, numbers of features, and measurement modalities.
    Our log-Euclidean signature (LES) distance recovers meaningful structural differences, outperforming competing methods in various application domains.
\end{abstract}

\section{Introduction}\label{sec:Intro}
With the increased availability of data-driven methods, the ability to compare data collections has become crucial.
In many cases, such comparisons are impossible without computationally-expensive alignment and manipulation, especially when the datasets are of different dimensions or are embedded in different domains.
This problem is prominent in learning, where the need for comparing datasets arises, e.g., in the evaluation and comparison of neural network (NN) architectures \citep{kornblith2019similarity}, data manifolds of generative models \citep{heusel2017gans}, and tasks in transfer and meta learning \citep{achille2019task2vec}.

A common approach to dataset comparison assumes that complex high-dimensional data lie on some manifold of lower dimension, i.e., the manifold hypothesis \citep{fefferman2016testing}.
Manifold learning techniques exploit this assumption to provide geometrically-motivated intrinsic data representations, commonly by constructing discrete approximations of the underlying manifold's Laplace-Beltrami operator or heat kernel \citep{belkin2003laplacian,coifman2006diffusion}.
Comparing such representations usually requires point-to-point correspondences, yielding an NP-hard quadratic assignment problem \citep{cela2013quadratic}.

One approach to avoiding the need for data alignment is via spectral properties of the operators.
\citet{tsitsulin2019shape} compare unaligned datasets by computing distances between approximations of the heat kernel traces at different temperature scales, lower bounding an optimal transport-based metric computing a pointwise distance between heat kernels \citep{memoli2011spectral}.
This pointwise comparison, providing the theoretical motivation for their distance as in many related methods, overlooks the symmetric positive-definite (SPD) structure of the heat kernel approximation used in practice.
Prior studies show that using appropriate metrics on the space of SPD matrices, forming a differentiable Riemannian manifold, significantly outperforms pointwise Euclidean comparisons  \citep{pennec2006riemannian,arsigny2006log,barachant2013classification}.

In this work, we take into account the SPD structure of heat kernel approximations and use the log-Euclidean (LE) metric \citep{arsigny2006log} to compare them.
We propose the \emph{log-Euclidean signature (LES) distance} for unaligned datasets based on a lower bound of this metric.
Our distance function can be formulated in terms of efficient descriptors, which capture the underlying intrinsic geometry of the data manifold, facilitating the comparison of heterogeneous datasets. 
We demonstrate properties of the LES distance on various data types, including $3$-$4$D point clouds of geometric shapes, single cell RNA-sequencing data, and NN embeddings, where we 
show that the LES distance provides useful insights on architectures and embedding structures. 

\section{Related Work}
Many frameworks for comparing datasets have been proposed.
We outline the most related approaches proposing geometrically-motivated measures. 

\textbf{Geometrically-motivated dataset descriptors.} 
An early work proposing a geometric global fingerprint of a dataset is `Shape-DNA' \citep{reuter2006laplace}, constructed from approximate Laplace-Beltrami eigenvalues. 
This signature was formulated for $2D$ and $3D$ manifolds, and its extension to high-dimensional data is nontrivial.
More recent works include the Geometry Score (GS) \citep{khrulkov2018geometry}, the Network Laplacian Spectral Descriptor (NetLSD), and the Intrinsic Multi-scale Distance (IMD) \citep{tsitsulin2018netlsd,tsitsulin2019shape}.
The GS characterizes datasets by their topological properties using persistent homology, which captures multi-scale structural differences of the data manifolds but comes with high computational costs.
NetLSD and IMD propose dataset signatures based on the heat kernel trace at different temperature scales, for graphs (NetLSD) and for general datasets (IMD).
IMD is the most related work to ours, discussed in Subsection \ref{sub:lesd} and Section \ref{sec:results}.

\textbf{Distances between datasets.}
Methods for computing distances between datasets often estimate some alignment between them. For example, \citet{peyre2016gromov} introduce a technique for computing the Gromov-Wasserstein (GW) distance between unaligned kernels.
Such methods typically lead to non-convex optimization problems with extremely high computational costs.
Distance measures that do not rely on data alignment include the Maximum Mean Discrepancy (MMD) \citep{gretton2012kernel}, Manifold Topology Divergence (MTD) \citep{barannikov2021manifold} and the Cross Local Intrinsic Dimensionality (CrossLID) \citep{barua2019quality}.
MTD measures topological differences of datasets by constructing simplicial complexes between them and evaluating the obtained topology. 
CrossLID compares data distributions in small neighborhoods of the two datasets.
The MMD, MTD and CrossLID require that datasets are embedded in the same space with the same dimensionality.

Finally, the Fr\'{e}chet Inception Score (FID) \citep{heusel2017gans} computes the Wasserstein-2 distance for Gaussians between outputs of an Inception convolutional NN layer \citep{szegedy2015going}. Due to the Inception layer, the FID score and its many variants are not suited for the evaluation of general datasets, which are the focus of this paper.

For the pairwise measures above, every coupling must be evaluated separately, yielding high computational costs when comparing many datasets. 
In contrast, our framework, while derived from a pairwise distance, first embeds the datasets, allowing for simple Euclidean comparisons.

\section{Preliminaries\label{sec:preliminaries}}
We briefly describe relevant background for the proposed method, first describing a method for approximating the Laplace-Beltrami operator and heat kernel from high-dimensional data, with minimal prior assumptions on the data type. 
We then outline the LE metric, which defines a Riemannian manifold on the space of SPD matrices.

\subsection{Diffusion Operator Approximations from Data\label{sub:dm}}
Manifold learning techniques typically construct positive-semidefinite kernel matrices from sampled data. 
We focus here on diffusion maps, a manifold learning technique that defines the diffusion operator, which was shown to have desirable convergence properties \citep{coifman2006diffusion}.

Given a set of points $x_i\in\mathbb{R}^d$, $i=1,\dots,N$, assumed to lie on some low-dimensional manifold $\mathcal{M}$ embedded in the ambient space $\mathbb{R}^d$, an affinity matrix is constructed by:
\begin{equation}
    K_{i,j}=\exp\left(-\frac{d^2(x_i,x_j)}{\sigma^2}\right),\label{eq:affkern}
\end{equation}
where $d(\cdot,\cdot)$ denotes some notion of distance suitable to the data (e.g., Euclidean distance) and $\sigma$ is the kernel scale, typically the median distance between data points multiplied by some constant \citep{ding2020phase}.
The affinity matrix is then normalized, producing the diffusion operator $\mathbf{W}_{DM}$:
\begin{eqnarray}
\tilde{\mathbf{W}} = \tilde{\mathbf{D}}^{-1}\mathbf{K}\tilde{\mathbf{D}}^{-1}, & \tilde{D}_{i,i} = \sum_{j=1}^N K_{i,j},\label{eq:dmnorm}\\
\mathbf{W}_{DM} = \mathbf{D}^{-1}\tilde{\mathbf{W}}, & D_{i,i} = \sum_{j=1}^N \tilde{W}_{i,j}\, ,\label{eq:dmkern}
\end{eqnarray}
where the normalization in \eqref{eq:dmnorm} mitigates the effect of non-uniform data sampling \citep{coifman2006diffusion}.

The diffusion operator satisfies several convergence properties. 
Specifically, $\mathbf{W}_{DM}^{t/\sigma^2}$ converges pointwise to the Neumann heat kernel of the underlying manifold as $N\rightarrow\infty$ and $\sigma^2\rightarrow 0$, and $\mathbf{L}=(\mathrm{\mathbf{I}}-\mathbf{W}_{DM})/\sigma^2$ converges to the Laplace-Beltrami operator \citep{coifman2006diffusion}.
Both continuous operators capture geometric properties of the manifold.
In addition, eigenvalues and eigenvectors of $\mathbf{L}$ converge to Laplace-Beltrami eigenvalues and eigenfunctions \citep{belkin2007convergence,dunson2021spectral}.

The diffusion operator has proven to be extremely useful in numerous tasks and diverse applications, providing data-driven low-dimensional representations that preserve the structure of the underlying manifold, see e.g.\  \citep{talmon2013diffusion,mishne2012multiscale,talmon2013empirical}.

Following previous work \citep{coifman2014diffusion,katz2020spectral,shnitzer2022spatio}, we define an SPD matrix that is similar to $\mathbf{W}_{DM}$ given by:
\begin{equation}
    \mathbf{W}=\mathbf{D}^{1/2}\mathbf{W}_{DM}\mathbf{D}^{-1/2}.\label{eq:dm_spd}
\end{equation}
Since $\mathbf{W}$ and $\mathbf{W}_{DM}$ are similar, they share eigenvalues, and their eigenvectors are related by $\mathbf{D}^{1/2}$.
Our proposed distance function relies on the eigenvalues of $\mathbf{W}$. 
The convergence properties of $\mathbf{W}_{DM}$ and its similarity to $\mathbf{W}$ thus imply that our distance compares (a transformation of) discrete approximations of the heat kernel spectrum.

\subsection{Riemannian Manifold of SPD Matrices\label{sub:lem}}
The space of $N\times N$ SPD matrices forms a Riemannian manifold when endowed with an appropriate metric. 
The most common metrics are the affine-invariant (AI) \citep{pennec2006riemannian} and LE metrics \citep{arsigny2006log}, both providing closed-form geodesic distances.
We focus on the LE metric, since it shares useful properties with the AI metric---i.e., invariance to inversion, translation, scaling and orthogonal transformations---while requiring lower computational cost. 

Let $\mathrm{Sym}^+_N$ denote the set of $N\times N$ real SPD matrices, and let $\mathrm{Sym}_N$ denote the set of $N\times N$ real symmetric matrices.
Define the logarithmic product for $\mathbf{W}_1,\mathbf{W}_2\in\mathrm{Sym}^+_N$ as $\mathbf{W}_1\odot\mathbf{W}_2=\exp(\log\mathbf{W}_1 + \log\mathbf{W}_2)$, where $\exp$ and $\log$ denote the matrix exponent and logarithm functions. 
This defines a Lie group structure on $\mathrm{Sym}^+_N$. By commutativity of the logarithmic product, the Lie group can be equipped with a bi-invariant Riemannian metric induced by any inner product on $\mathrm{Sym}_N=\mathcal{T}_{\mathbf{I}}\mathrm{Sym}^+_N$, the tangent space at the identity (Lie algebra) \citep{arsigny2007geometric}; the associated geodesic distance is then $d(\mathbf{W}_1,\mathbf{W}_2)=\left\Vert\log\mathbf{W}_1-\log\mathbf{W}_2\right\Vert$, where the norm is induced by the inner product.
The $L^2$ inner product (and corresponding Frobenius norm) yields the following LE distance:
\begin{eqnarray}
    d_{\mathrm{LE}}\left(\mathbf{W}_1,\mathbf{W}_2\right)^2 & = & \left\Vert\log\mathbf{W}_1-\log\mathbf{W}_2\right\Vert_F^2\, .\label{eq:lem}
\end{eqnarray}
This metric is invariant to similarity transformations (scaling and orthogonal transformations).

From the SPD manifold perspective, the matrix log in the LE distance maps SPD matrices onto the tangent space at the identity, i.e., the space of symmetric matrices, which is closed under Euclidean operations. 
Therefore, the distances between SPD matrices are computed as the Euclidean distance between their mappings at the identity, reducing the space of SPD matrices to a flat space \citep{arsigny2007geometric}. 

The LE metric improves performance in various applications involving SPD matrices, such as diffusion tensor imaging \citep{arsigny2006log}, image set classification \citep{huang2015log}, and land cover classification \citep{guan2019covariance}. 
Computing LE distances, however, requires pointwise correspondence of the two SPD matrices; such data alignment may be expensive or impossible to obtain. 
One of the key contributions of our work is to alleviate the need for data alignment via a lower bound on the LE distance. 
Therefore, the choice of the LE metric is intentional: it is simpler relatively to other SPD metrics, while performing well in practice.
Overcoming the need for alignment in other metrics may be more challenging, e.g.\ due to matrix multiplication/inversion in the AI metric \citep{pennec2006riemannian}.
In Appendix \ref{appsub:tori_addres} we empirically compare the LE metric, the AI metric, and Euclidean distance to motivate the use of the SPD Riemannian geometry and specifically the LE metric.

\section{Distance Between Intrinsic Representations of Unaligned Data\label{sec:method}}
Based on the tools in Section \ref{sec:preliminaries}, we propose a distance measure for comparing the underlying manifold geometry of unaligned datasets.
Our method conceptually consists of two steps. 
First, we represent each dataset using an SPD diffusion operator capturing underlying geometry of the data, constructed in Subsection \ref{sub:dm}.
Second, we account for the Riemannian structure of the space of SPD matrices by defining our new theoretically motivated distance measure for unaligned datasets as a lower bound of the LE metric.
The resulting distance is based on the ordered eigenvalues of the matrices representing the data. 
We refer to this distance as the \emph{LES (log-Euclidean signature) distance}.
This procedure embeds whole datasets into a new space based on their underlying geometry, allowing us to compare datasets of \emph{different sizes}, with \emph{different numbers of features}, and from \emph{various measurement modalities}.

Below, we lay out the details and derivations leading to the LES distance  and describe implementation considerations for large datasets.
Algorithm \ref{alg:lesd} summarizes our approach.

\subsection{Bounding the Log-Euclidean Distance\label{sub:lb}}
We begin with a setting involving datasets of the same size, but relax this assumption  in Subsection \ref{subsub:spect_est}. 
Let $\mathbf{W}_1,\mathbf{W}_2\in\mathbb{R}^{N\times N}$ denote SPD diffusion operators, constructed according to \eqref{eq:dm_spd} from two datasets.
We aim to compare these matrices in a manner that acknowledges their SPD structure using the LE distance \eqref{eq:lem}.
Such a comparison, however, requires point-to-point correspondences.

Existing methods for aligning and comparing datasets typically come with extremely high computational costs.
Therefore, we propose a different approach for comparing the matrix representations of the datasets through a lower bound of the LE distance.
The lower bound is described in the following proposition, along with an upper bound. 
\begin{prop}\label{prop:lelb}
Given two SPD matrices, $\mathbf{W}_1,\mathbf{W}_2\in\mathbb{R}^{N\times N}$, the following holds:
\begin{flalign}
    & \left\Vert\log\mathbf{W}_1\!-\!\log\mathbf{W}_2\right\Vert_F^2 \!\geq\! \sum_{i=1}^N\left(\log\lambda^{(1)}_i\!-\!\log\lambda^{(2)}_i\right)^2\label{eq:lemlb}\\
    & \left\Vert\log\mathbf{W}_1\!-\!\log\mathbf{W}_2\right\Vert_F^2 \!\leq\! \sum_{i=1}^N\left(\log\lambda^{(1)}_i\!-\!\log\lambda^{(2)}_{N-i+1}\right)^2\nonumber
\end{flalign}
where $\lambda^{(\ell)}_i$ denotes the $i$'th eigenvalue of $\mathbf{W}_\ell$ and the eigenvalues of both matrices are organized in decreasing order, i.e., $\lambda^{(\ell)}_1\geq\lambda^{(\ell)}_2\geq\dots\geq\lambda^{(\ell)}_N$.
\end{prop}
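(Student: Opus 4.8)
The plan is to reduce the whole statement to a two-sided bound on the eigenvalues of the symmetric matrices $A := \log\mathbf{W}_1$ and $B := \log\mathbf{W}_2$. Since $\mathbf{W}_1,\mathbf{W}_2$ are SPD, the matrix logarithm fixes their eigenvectors and takes the logarithm of each eigenvalue, so $A,B\in\mathrm{Sym}_N$ with eigenvalues $a_i=\log\lambda^{(1)}_i$ and $b_i=\log\lambda^{(2)}_i$. Because $\log$ is monotone increasing, the decreasing ordering of the $\lambda^{(\ell)}_i$ is inherited by the $a_i$ and $b_i$. The proposition is thus equivalent to showing, for symmetric matrices with decreasingly ordered spectra,
\[
\sum_{i=1}^N (a_i - b_i)^2 \;\leq\; \left\Vert A - B\right\Vert_F^2 \;\leq\; \sum_{i=1}^N (a_i - b_{N-i+1})^2 .
\]

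Next I would expand the Frobenius norm through its trace form,
\[
\left\Vert A - B\right\Vert_F^2 = \mathrm{tr}(A^2) - 2\,\mathrm{tr}(AB) + \mathrm{tr}(B^2) = \sum_{i=1}^N a_i^2 + \sum_{i=1}^N b_i^2 - 2\,\mathrm{tr}(AB),
\]
using that the squared Frobenius norm of a symmetric matrix equals the sum of its squared eigenvalues. All the content is now concentrated in the cross term $\mathrm{tr}(AB)$: the two desired inequalities follow at once by sandwiching it between $\sum_i a_i b_{N-i+1}$ and $\sum_i a_i b_i$. For the upper bound I also invoke the trivial reindexing $\sum_i b_{N-i+1}^2=\sum_i b_i^2$, which lets the right-hand side collapse into $\sum_i (a_i-b_{N-i+1})^2$.

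The heart of the argument, and the step I expect to be the main obstacle, is the symmetric-matrix trace inequality
\[
\sum_{i=1}^N a_i b_{N-i+1} \;\leq\; \mathrm{tr}(AB) \;\leq\; \sum_{i=1}^N a_i b_i ,
\]
a version of von Neumann's trace inequality. To establish it I would diagonalize $A=U\Lambda_A U^T$ and $B=V\Lambda_B V^T$ with $\Lambda_A=\mathrm{diag}(a_i)$, $\Lambda_B=\mathrm{diag}(b_i)$, and set $Q=U^T V$, which is orthogonal, so that
\[
\mathrm{tr}(AB) = \mathrm{tr}\!\left(\Lambda_A Q \Lambda_B Q^T\right) = \sum_{i,j} a_i b_j\, Q_{ij}^2 .
\]
The matrix $S$ with entries $S_{ij}=Q_{ij}^2$ is doubly stochastic, since the rows and columns of the orthogonal matrix $Q$ have unit norm, so $\mathrm{tr}(AB)=\sum_{i,j} a_i b_j S_{ij}$ is a linear functional of $S$ over the Birkhoff polytope. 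By Birkhoff's theorem its extrema are attained at permutation matrices, and the scalar rearrangement inequality then identifies the maximizer as the identity permutation (giving $\sum_i a_i b_i$) and the minimizer as the order-reversing permutation (giving $\sum_i a_i b_{N-i+1}$). Substituting these two bounds on $\mathrm{tr}(AB)$ into the expansion above yields the lower and upper bounds, respectively, completing the proof.
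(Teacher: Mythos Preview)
Your proposal is correct. The argument is complete and all steps are sound: the reduction to symmetric $A,B$, the trace expansion, the doubly-stochastic reformulation via $S_{ij}=Q_{ij}^2$, and the Birkhoff--rearrangement endgame all work exactly as stated.

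The paper's own proof, however, takes a different and much shorter route: it simply invokes Theorem~III.4.4 in Bhatia's \emph{Matrix Analysis}, which asserts that for Hermitian $A,B$ and any symmetric gauge function $\Phi$ one has $\Phi(\lambda^{(A)\downarrow}-\lambda^{(B)\downarrow})\leq\Phi(\lambda^{(A-B)})\leq\Phi(\lambda^{(A)\downarrow}-\lambda^{(B)\uparrow})$, and applies it with $A=\log\mathbf{W}_1$, $B=\log\mathbf{W}_2$, and $\Phi=\ell^2$. Your argument is in effect a direct, self-contained proof of the $\ell^2$ case of that theorem (equivalently, the Hoffman--Wielandt inequality together with its upper-bound companion), obtained via the von~Neumann trace inequality. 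The trade-off is clear: the paper's citation is terse and yields the result for every unitarily invariant norm at once, whereas your approach is elementary, requires no external reference, and makes the mechanism of the bound (the Birkhoff polytope and the rearrangement inequality) fully visible.
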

\begin{proof}
The proof follows directly from Theorem III.4.4 in \citet{bhatia2013matrix}, which states that for any pair of Hermitian matrices $\mathbf{A},\mathbf{B}\in\mathbb{R}^{N\times N}$ and any symmetric gauge function $\Phi$ on $\mathbb{R}^N$, we have $\Phi\left(\lambda^{(\mathbf{A})\downarrow}-\lambda^{(\mathbf{B})\uparrow}\right)\geq\Phi\left(\lambda^{(\mathbf{A}-\mathbf{B})}\right)\geq\Phi\left(\lambda^{(\mathbf{A})\downarrow}-\lambda^{(\mathbf{B})\downarrow}\right)$, where $\lambda^{(\mathbf{A})\downarrow}$ denotes the eigenvalues of $\mathbf{A}$ organized in decreasing order and $\lambda^{(\mathbf{B})\uparrow}$ denotes the eigenvalues of $\mathbf{B}$ organized in increasing order.
This theorem holds due to the symmetry of $\log\mathbf{W}_\ell$ and since the $l^2$ norm is a symmetric gauge function.
\end{proof}
Equality for the lower bound is obtained when the matrices are mutually diagonalizable and have an \emph{identical} order of basis elements w.r.t.\ the ordered eigenvalues. 
Equality for the upper bound is obtained when the matrices are mutually diagonalizable and have an \emph{opposite} order of basis elements.

Both bounds in Proposition \ref{prop:lelb} are easily computable from the matrices $\mathbf W_\ell$.
However, due to the reversed order of the eigenvalues, the upper bound is loose and is not suitable in practice due to numerical limitations on the estimation of small eigenvalues.
We construct LES from the \emph{lower} bound because the eigenvalues of the matrices are ordered consistently in the sum. 
In particular, the right-hand side of \eqref{eq:lemlb} is a squared Euclidean distance between vectors whose elements are $\log\lambda^{(\ell)}_i$.
This choice is further motivated by recent distances based on lower bounds of other common metrics \citep{tsitsulin2019shape, barannikov2021manifold}.

\subsection{Accommodating Large Datasets of Different Sizes\label{sub:bigdata}}
We use the lower bound from \eqref{eq:lemlb} to define our proposed distance, presented later in Subsection \ref{sub:lesd}, Eq. \eqref{eq:lesdist}.
However, a few computational considerations are required in its design to compare large datasets with different numbers of samples.
First, we add kernel regularization to $\mathbf{W}$ and provide theoretical reasoning in Subsection \ref{subsub:opreg}.
Second, we address datasets of different sizes by truncating the full spectrum to a fixed length $K$.
Third, we describe methods for estimating the spectrum and the matrix $\mathbf{W}$ in \eqref{eq:dm_spd} to facilitate the use of the LES distance in large datasets. 
The latter two points are discussed in Subsection \ref{subsub:spect_est}.

\subsubsection{Operator regularization\label{subsub:opreg}}
As the number of data points increases, the diffusion operator in \eqref{eq:dmkern} converges to the heat kernel.
The matrix logarithm, however, is bounded only for finite matrices, since as $N\rightarrow\infty$, the eigenvalues decay and $\lim_{k\rightarrow\infty}\log\lambda_k^{(\ell)}=-\infty$ \citep{quang2014log}.
This is especially prominent due to numerical instabilities and errors, causing zero or even negative eigenvalues in large matrices that are expected to be SPD.
Common practice adds kernel regularization of the form $\mathbf{W}_\ell+\gamma\mathrm{\mathbf{I}}$, where $0<\gamma\in\mathbb{R}$ and $\mathrm{\mathbf{I}}$ denotes the identity. 

Regularizing the operators serves two purposes. First, it allows us to compute the matrix logarithm for matrices whose eigenvalues decrease below the minimal numerical error limit. Second, it reduces the effect of noise, since noise commonly governs the smaller eigenvalues (for relatively high signal-to-noise ratios); in particular, this regularization reduces the effect of eigenvalues smaller than $\gamma$ in $\log(\lambda^{(\ell)}+\gamma)$.
Hence, this regularization can be thought of as truncating the log-spectrum at a specific precision.

This regularization leads to theoretical complications, however, since the infinite-dimensional identity operator is unbounded in the Hilbert-Schmidt norm, which extends the Frobenius norm to infinite dimensions.
\citet{quang2014log,minh2016approximate} tackle this issue by generalizing the LE metric to an infinite-dimensional setting via positive definite unitized Hilbert-Schmidt operators. 
They define the Log-Hilbert-Schmidt metric for regularized positive operators, which is bounded for both finite and infinite operators. 
In infinite dimensions, they modify the LE metric as follows:
\begin{flalign}
    d^2_{\mathrm{logHS}}\left(\mathbf{W}_1+\gamma\mathrm{\mathbf{I}},\mathbf{W}_2+\mu\mathrm{\mathbf{I}}\right) = \left(\log\gamma\!-\!\log\mu\right)^2
    +\!\left\Vert\log\left(\nicefrac{\mathbf{W}_1}{\gamma}\!+\!\mathrm{\mathbf{I}}\right)\!-\!\log\left(\nicefrac{\mathbf{W}_2}{\mu}\!+\!\mathrm{\mathbf{I}}\right)\!\right\Vert_{HS}^2
    \!\!\!\!\!\!\!
\end{flalign}
where $\left\Vert\cdot\right\Vert_{HS}^2$ denotes the Hilbert-Schmidt norm, $0<\gamma,\mu\in\mathbb{R}$, and $\mathbf{W}_1+\gamma\mathrm{\mathbf{I}}$ and $\mathbf{W}_2+\mu\mathrm{\mathbf{I}}$ are regularized positive definite operators.
When $\gamma=\mu$ and in the finite case, this formulation coincides with the standard LE metric applied to regularized operators. 
In this work, we follow \citet{minh2016approximate} and set $\gamma=\mu$.
This derivation motivates the use of the regularized diffusion operators in the LE metric, guaranteeing that the defined distance will not diverge as the number of samples, $N$, increases.

\begin{remark}
Another way to address the numerical limitations, i.e.\ eigenvalues that decrease below the minimal numerical error limit, is with a fixed-rank framework for symmetric positive semidefinite matrices.
In fact, such a framework can give additional justification for our distance. 
Our \emph{exact} distance can also be obtained by lower-bounding a fixed-rank framework in the spirit of \citet{bonnabel2010riemannian,collard2014anisotropy}, by zeroing the term related to the rotation group (see Eq (49) and Alg.\ 3 in \citet{collard2014anisotropy}), which requires the alignment.
\end{remark}
\begin{remark}
The choice of the regularization parameter $\gamma$ can be motivated by expected noise levels in the data.
For very noisy data, the spectrum components that represent noise are expected to correspond to eigenvalues with larger magnitudes, since the noise accounts for more variability \citep{ding2020phase}. 
In such scenarios, it is advised to take larger $\gamma$ values, which can be motivated by the spectral decay. 
For example, as implied by \citet{ding2020phase}, the ratio between consecutive eigenvalues, i.e., $\lambda_i/\lambda_{i+1}$, can  distinguish between parts of the spectrum that relate to the data and parts that relate to the noise.
In our experiments, we set $\gamma\in\left[10^{-8},10^{-5}\right]$, which empirically led to good results.
\end{remark}

\subsubsection{Spectrum truncation and estimation\label{subsub:spect_est}}
The formulation in \eqref{eq:lemlb} employs the entire spectrum of the operator.
However, since we aim to compare datasets of different sizes, we truncate the spectrum and use only the leading (largest) eigenvalues.
This choice is motivated by the regularization above, which reduces the effects of smaller eigenvalues, and by the fact that smaller eigenvalues are commonly governed by noise.
Therefore, truncating the spectrum typically will not affect the ability to capture geometric structures in the data and is common in similar scenarios \citep{reuter2006laplace,cosmo2019isospectralization}.
\begin{remark}
Truncating the spectrum for comparing datasets of different sizes is significantly more advantageous than modifying the size of the dataset, e.g.\ by removing or repeating samples.
With direct data manipulations, geometric structures might be distorted or ignored, whereas with spectrum truncation, the dominant geometric structures are preserved, since they are typically captured by the largest eigenvalues. 
\end{remark}
\begin{remark}
Spectrum truncation preserves the lower bound of the LE metric in \eqref{eq:lemlb} since it removes nonnegative terms. 
\end{remark}

\textbf{Spectrum estimation.}
Computing even the truncated spectrum for large matrices is computationally demanding.
We use a recent spectral approximation technique based on the Nystr\"{o}m method for low-rank positive semidefinite (PSD) matrix approximation \citep{tropp2017fixed}.
Their approach implements a more stable truncation of the Nystr\"{o}m approximation to rank $K<N$ based on \citet{li2017algorithm}.

We outline the relevant parts of their method in Algorithm \ref{alg:evalest}.
This algorithm approximates the leading $K$ eigenvalues of a PSD matrix $\mathbf{W}$, with theoretical guarantees.
For the eigenvalue estimation in our setting, we obtain the following theoretical bounds based on a result by \citet{tropp2017fixed}: 
\begin{prop}\label{prop:evalest_bound}
The rank-$K$ approximation of a real symmetric PSD matrix, $\mathbf{W}$, computed according to Algorithm \ref{alg:evalest}, and denoted by $\hat{\mathbf{W}}_K$, satisfies the following error bounds.
\begin{enumerate}
\item The eigenvalue approximation error is bounded by:
\begin{flalign}
    &\mathbb{E}\left[\sum_{i=1}^K\left\vert\lambda_i^{(\mathbf{W})}\!-\!\lambda_i^{(\hat{\mathbf{W}}_K)}\right\vert\right]\leq\frac{K}{M\!-\!K\!-\!1}\sum_{i=K\!+\!1}^N\!\!\lambda_i^{(\mathbf{W})}\label{eq:eigbound}
\end{flalign}
\item The $\gamma$-regularized log-eigenvalue error is bounded by:
\begin{flalign}
    & \mathbb{E}\!\left[\sum_{i=1}^K\left\vert\log\!\left(\lambda_i^{(\mathbf{W})}\!+\!\gamma\right)\!-\!\log\!\left(\lambda_i^{(\hat{\mathbf{W}}_K)}\!+\!\gamma\right)\right\vert\right]
    \leq \frac{1.5K}{(M-K-1)(\lambda_K^{(\mathbf{W})}+\gamma)}\!\sum_{i=K\!+\! 1}^N\!\!\lambda_i^{(\mathbf{W})}\label{eq:logeigbound}
\end{flalign}
\end{enumerate}
for $\tfrac{|\lambda^{(\mathbf{W})}_i-\lambda^{(\hat{\mathbf{W}}_K)}_i|}{\lambda^{(\mathbf{W})}_i+\gamma}\!\leq\! 0.5828$, where the expectations are over the normal distribution (due to the random $\mathbf{\Omega}$ in Algorithm \ref{alg:evalest}), $\lambda_i^{(\mathbf{W})}$ and $\lambda_i^{(\hat{\mathbf{W}}_K)}$ denote the eigenvalues of $\mathbf{W}$ and $\hat{\mathbf{W}}_K$, respectively, organized in decreasing order, and $M$ denotes the number of random vectors (columns) in $\mathbf{\Omega}$.
\end{prop}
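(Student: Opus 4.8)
The plan is to derive both inequalities from a single expected trace-norm guarantee for the fixed-rank Nystr\"om approximation of \citet{tropp2017fixed}, combined with two elementary spectral facts: that the approximant is dominated by $\mathbf{W}$ in the positive-semidefinite (Loewner) order, and that $-\log(1-\delta)$ grows at most linearly in $\delta$ on the relevant range. The first bound is essentially a restatement of Tropp et al.'s result; the second is obtained from the first by a scalar linearization of the logarithm.

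For the eigenvalue bound \eqref{eq:eigbound}, I would first observe that the output of Algorithm \ref{alg:evalest} satisfies $\mathbf{0}\preceq\hat{\mathbf{W}}_K\preceq\mathbf{W}$: the raw Nystr\"om approximation is a compression of $\mathbf{W}$ and hence lies below it in the Loewner order, and the subsequent stable rank-$K$ truncation of \citet{li2017algorithm} only removes a PSD component. By Weyl monotonicity this gives $\lambda_i^{(\hat{\mathbf{W}}_K)}\le\lambda_i^{(\mathbf{W})}$ for every $i$, so each absolute value in \eqref{eq:eigbound} may be dropped. Since $\hat{\mathbf{W}}_K$ has rank at most $K$, so that $\operatorname{tr}\hat{\mathbf{W}}_K=\sum_{i=1}^K\lambda_i^{(\hat{\mathbf{W}}_K)}$, I can rewrite
\[
\sum_{i=1}^K\bigl(\lambda_i^{(\mathbf{W})}-\lambda_i^{(\hat{\mathbf{W}}_K)}\bigr)=\operatorname{tr}\bigl(\mathbf{W}-\hat{\mathbf{W}}_K\bigr)-\sum_{i=K+1}^N\lambda_i^{(\mathbf{W})}.
\]
I then invoke the \citet{tropp2017fixed} expected trace-norm bound $\mathbb{E}\,\|\mathbf{W}-\hat{\mathbf{W}}_K\|_*\le\bigl(1+\tfrac{K}{M-K-1}\bigr)\sum_{i>K}\lambda_i^{(\mathbf{W})}$; because $\mathbf{W}-\hat{\mathbf{W}}_K\succeq\mathbf{0}$ the trace norm equals the trace, and subtracting the tail sum cancels the leading $1$, leaving exactly $\tfrac{K}{M-K-1}\sum_{i>K}\lambda_i^{(\mathbf{W})}$.

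For the log-eigenvalue bound \eqref{eq:logeigbound}, I would reduce the log-error term to the eigenvalue error through a scalar inequality. Setting $\delta_i=\bigl(\lambda_i^{(\mathbf{W})}-\lambda_i^{(\hat{\mathbf{W}}_K)}\bigr)/\bigl(\lambda_i^{(\mathbf{W})}+\gamma\bigr)$, the Loewner ordering gives $\delta_i\ge 0$ and the hypothesis gives $\delta_i\le 0.5828$, while
\[
\log\!\bigl(\lambda_i^{(\mathbf{W})}+\gamma\bigr)-\log\!\bigl(\lambda_i^{(\hat{\mathbf{W}}_K)}+\gamma\bigr)=-\log(1-\delta_i).
\]
The key estimate is $-\log(1-\delta)\le 1.5\,\delta$ for $0\le\delta\le 0.5828$, which holds because $\delta\mapsto-\log(1-\delta)/\delta$ is increasing and equals $1.5$ precisely at $\delta=0.5828$; this is exactly why that threshold appears in the statement. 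Replacing the denominator $\lambda_i^{(\mathbf{W})}+\gamma$ by the smaller quantity $\lambda_K^{(\mathbf{W})}+\gamma$ (valid for $i\le K$ since the eigenvalues are decreasing) then yields
\[
\sum_{i=1}^K\bigl\vert\log(\lambda_i^{(\mathbf{W})}+\gamma)-\log(\lambda_i^{(\hat{\mathbf{W}}_K)}+\gamma)\bigr\vert\le\frac{1.5}{\lambda_K^{(\mathbf{W})}+\gamma}\sum_{i=1}^K\bigl(\lambda_i^{(\mathbf{W})}-\lambda_i^{(\hat{\mathbf{W}}_K)}\bigr),
\]
and taking expectations and inserting the first bound gives \eqref{eq:logeigbound}.

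I expect the main obstacle to be matching Algorithm \ref{alg:evalest} to the precise hypotheses of the \citet{tropp2017fixed} theorem---verifying that the Gaussian sketch dimension $M$, the stable rank-$K$ truncation, and the PSD setting all satisfy the conditions under which the expected trace-norm bound holds, and confirming that the Loewner ordering $\hat{\mathbf{W}}_K\preceq\mathbf{W}$ survives that particular truncation step. By contrast, the scalar logarithm inequality is routine once the threshold $0.5828$ is recognized as the point where $-\log(1-\delta)/\delta$ reaches $1.5$.
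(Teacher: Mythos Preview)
Your proof is correct and reaches both bounds, but for Part~1 you and the paper take different intermediate routes. The paper never invokes the Loewner ordering $\hat{\mathbf{W}}_K\preceq\mathbf{W}$; instead it applies Bhatia's Theorem~III.4.4 (the same majorization inequality behind Proposition~\ref{prop:lelb}) to the symmetric matrix $\mathbf{W}-\hat{\mathbf{W}}_K$, obtaining $\|\mathbf{W}-\hat{\mathbf{W}}_K\|_1\ge\sum_{i}|\lambda_i^{(\mathbf{W})}-\lambda_i^{(\hat{\mathbf{W}}_K)}|$ as an \emph{inequality}, and then uses only the rank-$K$ property (and PSD-ness of $\mathbf{W}$) to split off the tail $\sum_{i>K}\lambda_i^{(\mathbf{W})}$. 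Your route replaces that Bhatia step with the exact identity $\|\mathbf{W}-\hat{\mathbf{W}}_K\|_*=\operatorname{tr}(\mathbf{W}-\hat{\mathbf{W}}_K)$, which is only available once $\mathbf{W}-\hat{\mathbf{W}}_K\succeq 0$ is in hand. The paper's argument is thus slightly more robust---it needs nothing about how $\hat{\mathbf{W}}_K$ sits relative to $\mathbf{W}$ beyond symmetry and rank---while yours is more elementary once the ordering is verified and avoids the detour through symmetric gauge functions. For Part~2 the two arguments essentially coincide: the paper treats both signs $\pm\epsilon_i$ (using $\log(1+x)\le x$ and $|\log(1-x)|\le 1.5x$) where you use the ordering to fix $\delta_i\ge 0$, but the scalar bound $-\log(1-\delta)\le 1.5\delta$ on $[0,0.5828]$ and the replacement of the denominator by $\lambda_K^{(\mathbf{W})}+\gamma$ match the paper exactly.
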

These bounds provide additional guidelines for setting $\gamma$ and $M$.
Note that due to the spectral decay, as $N$ increases, if we appropriately increase $M$, the error tends to $0$.

\begin{proof}
Applying Theorem 4.1 in \citet{tropp2017fixed} to \emph{symmetric} PSD matrices, we have
\begin{flalign}
    &\mathbb{E}\!\left[\left\Vert\mathbf{W}\!-\!\hat{\mathbf{W}}_K\right\Vert_1\right]
    \!\leq\!\left(\!1\!+\!\frac{K}{M\!-\!K\!-\!1}\right)\!\sum_{i=K\!+\!1}^N\!\!\lambda_i^{(\mathbf{W})},\label{eq:evalest_proof}
\end{flalign}
where $\left\Vert\cdot\right\Vert_1$ denotes the Schatten 1-norm.
Using Theorem III.4.4 in \citet{bhatia2013matrix} (see proof of Proposition \ref{prop:lelb}),
\begin{flalign}
    \left\Vert\mathbf{W}\!-\!\hat{\mathbf{W}}_K\right\Vert_1  = \sum_{i=1}^N\left\vert\lambda_i^{(\mathbf{W}-\hat{\mathbf{W}}_K)}\right\vert\geq \sum_{i=1}^N\left\vert\lambda_i^{(\mathbf{W})}\!-\!\lambda_i^{(\hat{\mathbf{W}}_K)}\right\vert=\sum_{i=1}^K\left\vert\lambda_i^{(\mathbf{W})}-\lambda_i^{(\hat{\mathbf{W}}_K)}\right\vert+\sum_{i=K+1}^N\lambda_i^{(\mathbf{W})},\nonumber
\end{flalign}
since $\mathbf{W}$ and $\hat{\mathbf{W}}_K$ are symmetric PSD and the $\ell^1$ norm is a symmetric gauge function. 
The last equality is due to the construction of $\hat{\mathbf{W}}_K$ with rank $K$. 
Substituting this result into \eqref{eq:evalest_proof} and rearranging terms yields \eqref{eq:eigbound}.
The proof of the second part of the proposition appears in Appendix \ref{app:prop2proof}.
\end{proof}
The proposition holds in our case, since the matrices for which we estimate the eigenvalue decomposition, i.e., $\mathbf{W}$ in \eqref{eq:dm_spd}, are symmetric positive-definite (SPD).

Proposition \ref{prop:evalest_bound} demonstrates that the eigenvalue estimation error depends on the spectrum tail, the spectral decay, and the number $M$ of random vectors.
The spectral decay of our $\mathbf{W}$ can be controlled to some extent by the kernel scale, $\sigma^2$ in \eqref{eq:affkern}, where larger $\sigma^2$ yields faster spectral decay.
In our experiments, we set $M=2K$ to obtain a reasonable coefficient multiplying the spectrum tail in the bound.

The computational complexity of this approximation algorithm is $\mathcal{O}(MN^2+M^2N)$ \citep{tropp2017fixed}. For very large datasets, we have $M\ll N$ due to  spectrum truncation.

Note that the dense matrix $\mathbf{W}$ in \eqref{eq:dm_spd} requires storing $N^2$ floating point numbers, which may be limiting in large datasets.
To reduce this cost, the matrix $\mathbf{W}$ can be computed in small batches of rows (in two iterations due to its normalization), since Algorithm \ref{alg:evalest} relies on $\mathbf{W}$ only through matrix multiplications with $\mathbf{\Omega}$.

\setlength{\textfloatsep}{5pt}
\begin{algorithm}[tb]
  \caption{Fixed-Rank approx.\ \citep{tropp2017fixed}}
  \label{alg:evalest}
\begin{algorithmic}
  \REQUIRE PSD matrix $\mathbf{W}\in\mathbb{R}^{N\times N}$, rank $K$, size $M$. 
  \ENSURE Diagonal matrix $\mathbf{\Lambda}\in\mathbb{R}^{K\times K}$, such that 
  $\mathbf W\approx\hat{\mathbf{W}}_K=\mathbf{U}\mathbf{\Lambda}\mathbf{U}^T$, for some $\mathbf{U}\in\mathbb{R}^{N\times K}$ with orthonormal columns.
  \STATE
  \STATE {\bfseries Function} \textsc{ApproxEigenvalues}$\left(\mathbf{W},K, M\right)$
  \begin{ALC@g}
    \STATE  $\mathbf{\Omega} \leftarrow \texttt{randn}\left(N,M\right)$
    \STATE $\mathbf{\Omega} \leftarrow \texttt{orth}\left(\mathbf{\Omega}\right)$ \COMMENT{Orthonormal basis for $\mathrm{range}(\mathbf{\Omega})$}
    \STATE $\mathbf{Y}\leftarrow \mathbf{W}\mathbf{\Omega}$
    \STATE $\nu\leftarrow \mu\ \texttt{norm}(\mathbf{Y})$ \COMMENT{$\mu=2.2\cdot 10^{-16}$}
    \STATE $\mathbf{Y}\leftarrow\mathbf{Y}+\nu\mathbf{\Omega}$
    \STATE $\mathbf{B}\leftarrow \mathbf{\Omega^T}\mathbf{Y}$
    \STATE $\mathbf{C}\leftarrow \texttt{chol}\left((\mathbf{B}+\mathbf{B^T})/2\right)$ \COMMENT{Cholesky decomp.}
    \STATE $\left(\mathbf{U},\mathbf{\Sigma},\sim\right)\leftarrow\texttt{svd}\left(\mathbf{Y}/\mathbf{C}\right)$
    \STATE $\mathbf{\Lambda}\leftarrow\max\{0,\mathbf{\Sigma}^2_{1:K,1:K}-\nu\mathrm{\mathbf{I}}\}$
    \STATE {\bfseries return} sorted $\mathbf{\Lambda}$
  \end{ALC@g}
\end{algorithmic}
\end{algorithm}

\subsection{The Log-Euclidean Signature Distance\label{sub:lesd}}
We can now define the log-Euclidean signature distance between unaligned datasets. 
Given two datasets $\{x^{(1)}_i\in\mathbb{R}^{d_1}\}_{i=1}^{N_1}$ and $\{x^{(2)}_i\in\mathbb{R}^{d_2}\}_{i=1}^{N_2}$, we construct diffusion operators  $\mathbf{W}_1\in\mathbb{R}^{N_1\times N_1}$ and $\mathbf{W}_2\in\mathbb{R}^{N_2\times N_2}$ according to \eqref{eq:dm_spd}. 
We approximate the $K$ leading eigenvalues $\{\hat{\lambda}^{(\ell)}_i\}_{i=1}^K$ for each matrix using Algorithm \ref{alg:evalest}. 
The LES distance is then:
\begin{flalign}
    \boxed{d^2_{\mathrm{LES}}\!\left(\mathbf{W}_1,\mathbf{W}_2\right)\!:=\!
    \sum_{i=1}^K\!\left(\log\!\left(\hat{\lambda}^{(1)}_i\!+\!\gamma\right)\!-\!\log\!\left(\hat{\lambda}^{(2)}_i\!+\!\gamma\right)\!\right)^2}
    \label{eq:lesdist}
\end{flalign}
This procedure is summarized in Algorithm \ref{alg:lesd}.

\begin{remark}
The LES distance is not a metric, since it does not satisfy the identity of indiscernibles, i.e. $d_{LES}(\mathbf{W}_1,\mathbf{W}_2)\!\neq\! 0$ is not guaranteed for $\mathbf{W}_1\!\neq\!\mathbf{W}_2$. Specifically, datasets with underlying non-isometric but isospectral manifolds (or isospectral up to the $K$th eigenvalue) are indistinguishable.
Isospectral manifolds exist \citep{kac1966can} but typically require careful construction and are rarely encountered in practice, especially in low dimensions \citep{reuter2006laplace}.
In addition, for $\mathbf{W}_1\!\equiv\!\mathbf{W}_2$,  $d_{LES}(\mathbf{W}_1,\mathbf{W}_2)\!=\!0$ is guaranteed only with exact eigenvalue computation. However, our experimental results depict that the distance is close to $0$ also with the eigenvalue approximation, and even for \emph{different} point clouds sampled from the same manifold.
\end{remark}
\begin{remark}
The proposed framework is not limited to diffusion operators $\mathbf{W}$. 
While our choice of $\mathbf{W}$ is motivated by its convergence to the heat kernel, the LES distance can be computed using any SPD matrix, e.g., covariance matrices.
\end{remark}

\textbf{Relation to IMD.} The LES distance is directly related to IMD \citep{tsitsulin2019shape}, written using our notation as 
$$
d_{\mathrm{IMD}}(\mathbf{W}_1,\mathbf{W}_2)=\sup_{t>0}c(t)\left\vert \sum_i(\lambda_i^{(1)})^t-\sum_i(\lambda_i^{(2)})^t\right\vert
$$
where $c(t)=e^{-2(t+t^{-1})}$, $\lambda_i^{(\ell)}=\exp(-\mu_i^{(\ell)})$ and $\mu_i^{(\ell)}$ are (approx.) Laplace-Beltrami eigenvalues.
This formulation demonstrates two key differences between LES and IMD.
First, LES compares aligned spectral locations, whereas in IMD the overall decay of the spectrum is compared in different temperature scales, controlled by the variable $t$.
Second, $\log$ is applied to the eigenvalues by LES but not IMD, indicating that by taking a proper metric into account when comparing discrete approximations of heat kernels, LES actually compares approximate Laplace-Beltrami eigenvalues (related to eigenvalues of the heat kernel via a $\log$), while IMD compares heat kernel eigenvalues, which decay quickly.
We demonstrate empirically in Section \ref{sec:results} that the LES distance outperforms IMD in all considered applications, particularly when the datasets contain complex structures with differences in multiple scales.

\begin{algorithm}[hb]
  \caption{Log-Euclidean Signature Distance}
  \label{alg:lesd}
\begin{algorithmic}
  \REQUIRE $r$ datasets $X^{(\ell)}=\left\lbrace x^{(\ell)}_i\in\mathbb{R}^{d_\ell}\right\rbrace_{i=1}^{N_\ell}$, $\ell=1,\dots,r$.
  \ENSURE LES descriptors, $F_{LES}^{(\ell)}\in\mathbb{R}^K$, and LES distances, $d_{LES}(\ell,q)$, for $\ell, q\in\left[1,r\right]$.
  \STATE
  \FOR{$\ell=1$ {\bfseries to} $r$}
    \STATE $\mathbf{W}_{\ell}\leftarrow \texttt{DM-SPD}\left(X^{(\ell)}\right)$ \COMMENT{According to \eqref{eq:dm_spd}}
    \STATE $\left\lbrace\hat{\lambda}_i^{(\ell)}\right\rbrace_{i=1}^K\leftarrow \textsc{ApproxEigenvalues}(\mathbf{W}_{\ell})$
    \STATE $F_{\mathrm{LES}}^{(\ell)}\leftarrow \left[\log\left(\hat{\lambda}_1^{(\ell)}+\gamma\right),\dots,\log\left(\hat{\lambda}_K^{(\ell)}+\gamma\right)\right]$
  \ENDFOR
  \FOR{$\ell,q\in\left[1,r\right]$}
    \STATE $d^2_{\mathrm{LES}}(\ell,q)\leftarrow \left\Vert F_{LES}^{(\ell)}-F_{LES}^{(q)} \right\Vert_2^2$
  \ENDFOR
\end{algorithmic}

\end{algorithm}

\textbf{Spectral descriptors of data.}
Spectral descriptors appear in several applications, e.g., in shape analysis \citep{reuter2006laplace,cosmo2019isospectralization} and molecular representations \citep{schrier2020can}.
We similarly define a dataset descriptor based on the bound in \eqref{eq:lemlb} and the proposed distance \eqref{eq:lesdist}, as $$F_{\mathrm{LES}}^{(\ell)}= \left[\log\left(\hat{\lambda}_1^{(\ell)}+\gamma\right),\dots,\log\left(\hat{\lambda}_K^{(\ell)}+\gamma\right)\right].$$ 
$F_{\mathrm{LES}}^{(\ell)}$ embeds datasets into a $K$-dimensional space based on intrinsic geometry, facilitating dimensionality reduction and visualization and allowing for intuitive analysis of clusters.

\section{Experimental Results\label{sec:results}}
Throughout this section we compute LES according to Algorithm \ref{alg:lesd} with $\sigma^2=2\,\mathrm{median}\left(d^2(x_i,x_j)\right)$, $K\in\lbrace 200,500\rbrace$, $M=2K$ and $\gamma\in[10^{-8},10^{-5}]$.
The exact parameters used in each application are described in the appendices.
An additional experiment using LES to match hidden representations of NN is presented in Appendix \ref{app:layers}.

The code is available in \url{https://github.com/shnitzer/les-distance}.

\subsection{Toy Example: Comparison of Geometric Shapes\label{sub:exp_tori}}
We demonstrate the main advantages of the LES distance over competing methods on a toy example consisting of $3$D and $4$D point clouds.
We sample points from two $2$D tori, $T_{2}=\mathcal{S}^1\times\mathcal{S}^1$, and two $3$D tori, $T_{3}=\mathcal{S}^1\times\mathcal{S}^1\times\mathcal{S}^1$, which have the same major radius $R_1$.
The two $T_{2}$ tori differ in their minor radius, denoted by $R_2$ and $R_2^{Sc}=c\!\cdot\! R_2$, and the two $T_{3}$ tori share $R_2$ and differ in their minor radius, denoted by $R_3$ and $R_3^{Sc}=c\!\cdot\! R_3$.
The torus equations are formulated such that $\lim_{R_3\rightarrow 0}T_{3}=T_{2}$, and $\lim_{R_2\rightarrow 0}T_{2}=T_{1}$, i.e., converges to a $1$D circle.
We denote the two tori with the scaled radius by $T_2^{Sc}$ and $T_3^{Sc}$. 
We sample the $4$ tori independently such that no pointwise correspondence exists.
The exact equations of the tori embedding are given in Appendix \ref{app:tori_exp}, eq. \eqref{appeq:tori_eqs}, along with a detailed description of the chosen parameters in this subsection.

We compute distances between the four tori for different $c$'s using LES, IMD, Geometry Score (GS) \citep{khrulkov2018geometry}, and Gromov-Wasserstein (GW) \citep{peyre2016gromov}.
To emphasize the conceptual differences between LES and IMD, we approximate IMD according to $d_{\mathrm{IMD}}(\mathbf{W}_1,\mathbf{W}_2)$ (Section \ref{sub:lesd}), using our estimated eigenvalues of $\mathbf{W}$. 
We denote this method by \emph{IMD (our approach)}.
Due to the different dimensions of the tori datasets, methods requiring that all datasets are embedded in the same space, such as MTD and CrossLID, cannot be applied. We compare our results with other standard topological persistence diagram metrics in Appendix \ref{appsub:tori_topo}.

\setlength{\textfloatsep}{\textfloatsepsave}
\begin{figure}[ht]
\centering
\subfigure{
\includegraphics[width=0.7\textwidth]{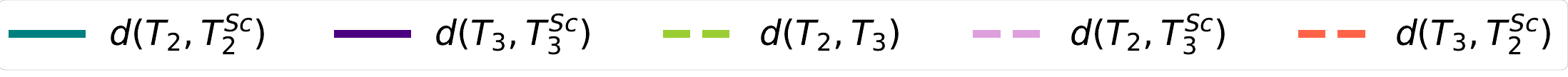}
}\vspace{-0.2in}

\setcounter{subfigure}{0}
\subfigure[LES]{
    \includegraphics[trim={0 0.2in 0 0}, clip, width=0.2\textwidth]{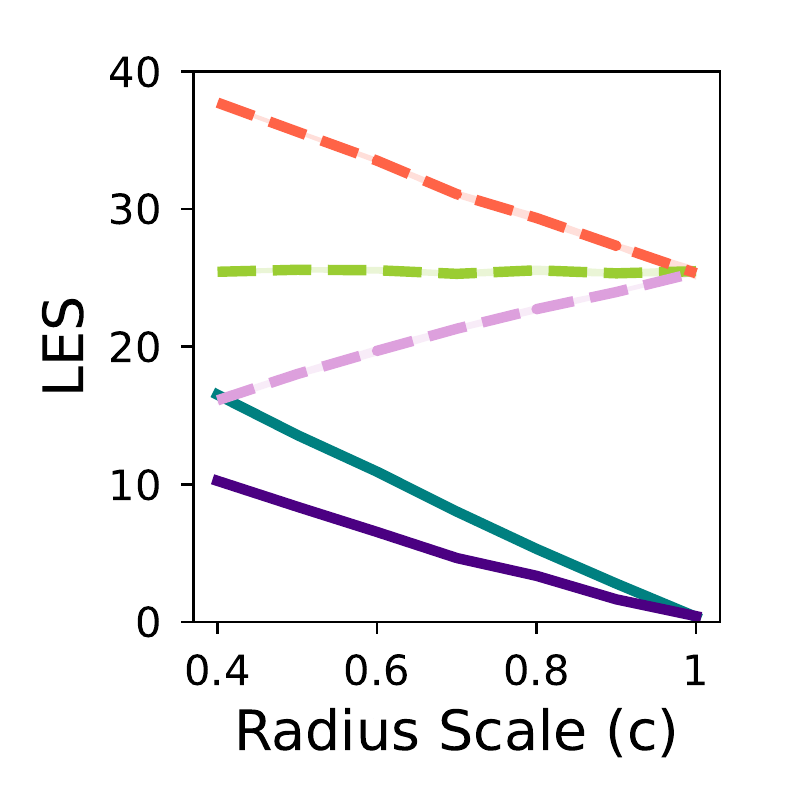}
    \label{fig:tori_les}
}
\hspace{-0.3in}
\subfigure[IMD]{
    \includegraphics[trim={0 0.2in 0 0}, clip, width=0.2\textwidth]{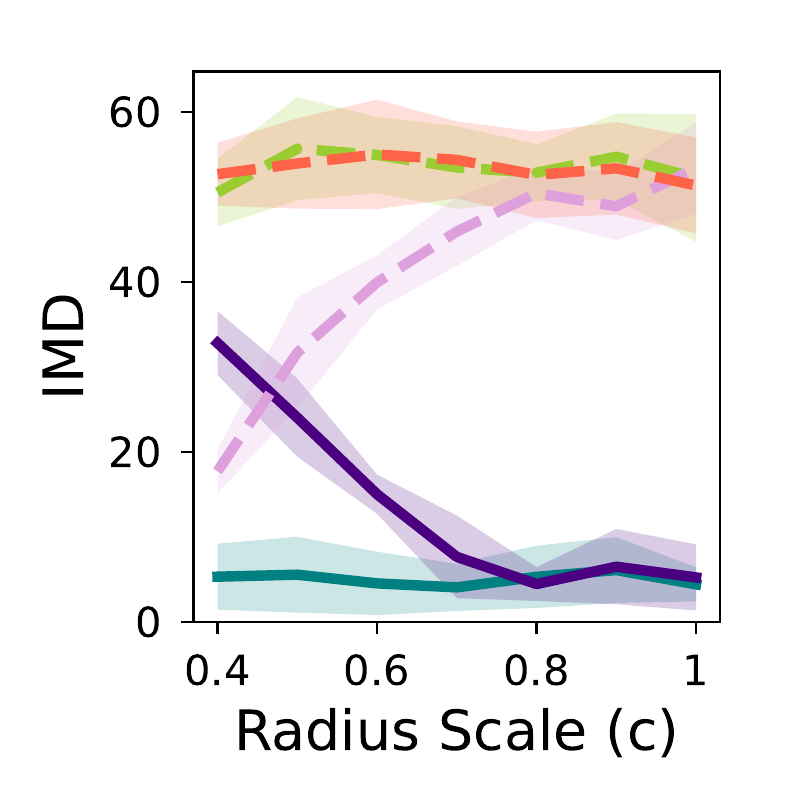}
    \label{fig:tori_imd}
}
\hspace{-0.3in}
\subfigure[IMD (our approach)]{
    \includegraphics[trim={0 0.2in 0 0}, clip, width=0.2\textwidth]{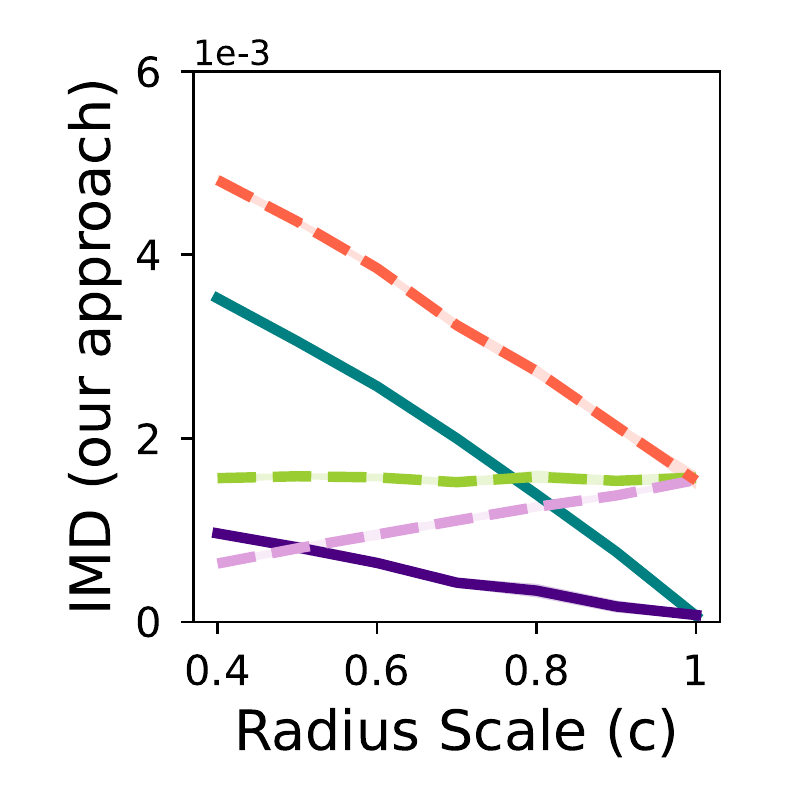}
    \label{fig:tori_imdours}
}
\hspace{-0.3in}
\subfigure[Geometry Score]{
    \includegraphics[trim={0 0.2in 0 0}, clip, width=0.2\textwidth]{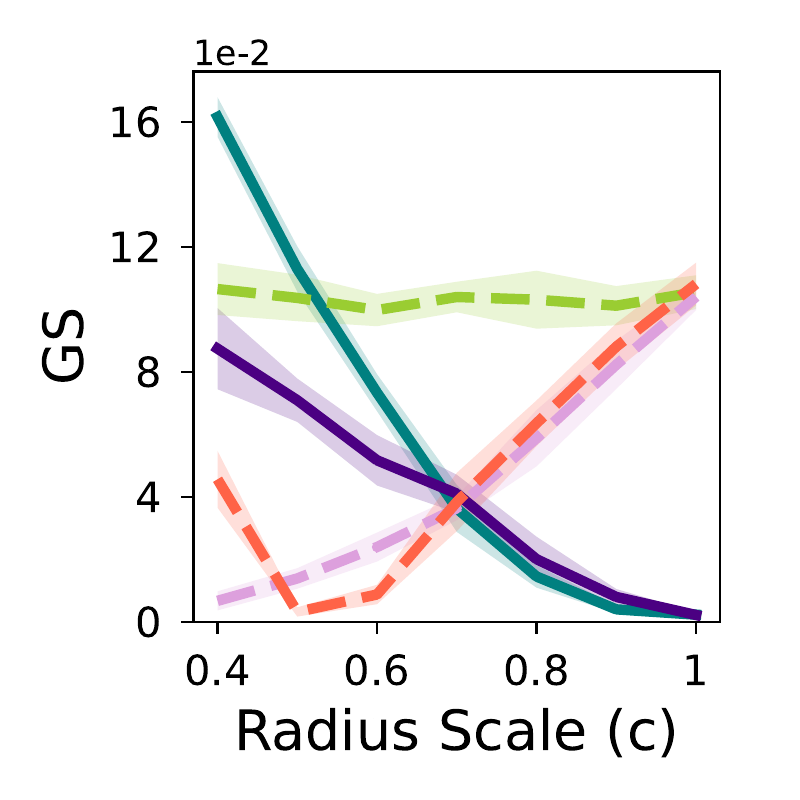}
    \label{fig:tori_gs}
}
\hspace{-0.3in}
\subfigure[Gromov-Wasserstein]{
    \includegraphics[trim={0 0.2in 0 0}, clip, width=0.2\textwidth]{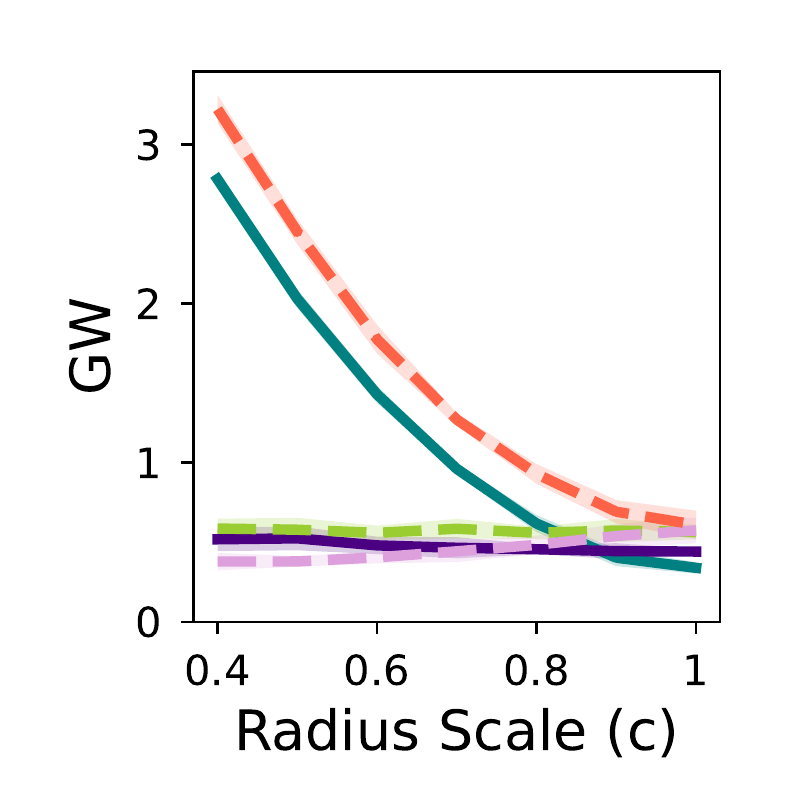}
    \label{fig:tori_gw}
}

\caption{Comparison of distance measures on $2$D and $3$D tori with radii related by scaling $c$.\label{fig:tori_dist}}
\vskip -0.1in 
\end{figure}

\Cref{fig:tori_dist} presents the mean and standard deviation of the distances between the different tori, averaged over $10$ trials.
Solid lines denote distances between tori with the same dimensionality and dashed lines denote distances between tori with different dimensions.
As $c$ decreases, it is expected that distances $d(T_2,T_2^{Sc})$, $d(T_3,T_3^{Sc})$ and $d(T_3,T_2^{Sc})$ will increase, since they converge to a distance between shapes of different dimensions, and that $d(T_2,T_3^{Sc})$ will decrease, since $\lim_{c\rightarrow 0}T_{3}^{Sc}=T_{2}$.
Based on \cref{fig:tori_dist}, the only methods that exhibit the expected trends as $c$ decreases are LES, GW, and IMD computed using our approach, which indicates that our approach for estimating leading heat kernel eigenvalues yields a more stable and accurate distance.
The IMD distance in \cref{fig:tori_imd} captures mainly the major dimensionality differences and is not sensitive to small differences in the radius scale.
Moreover, the LES distance distinguishes between dimensionality differences and scale differences for a wider range of $c$ values compared with GW and IMD (our approach), as depicted by the intersection between the distance lines of $d(T_2,T_2^{Sc})$ and $d(T_2,T_3^{Sc})$.

The distance trends captured by LES follow the trends of the exact log-Euclidean distances between the SPD matrices representing the tori data with ground truth pointwise correspondence. We present an example in Appendix \ref{appsub:tori_addres}.

\textbf{Running time and stability.}
Using the tori simulation, we compare computational complexity and stability of LES with IMD (with the exact $k$-nn distance), GW, and GS, for different dataset sizes, $N$.
\Cref{fig:tori_time} demonstrates that LES and IMD require significantly less computational time than GW and GS.
While LES requires more computational time than IMD, its advantages in stability and accuracy are significant, as depicted in the examples in this section.

To evaluate the effect of different sample sizes on LES, we compute the distance $d(T_2,T_3)$ for different $N$ values.
\Cref{fig:tori_stab} presents this distance normalized by the distance at $N=10^4$. 
This plot demonstrates that LES is the most robust to different sample sizes, even for relatively small $N$.

\begin{figure}[ht]
\centering
\subfigure{
\includegraphics[width=0.35\textwidth]{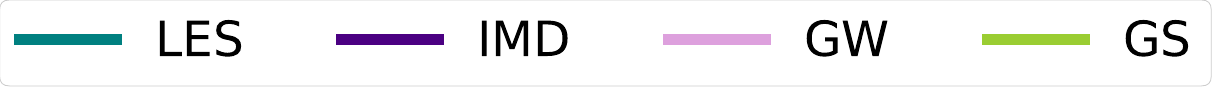}
}\vspace{-0.15in}

\setcounter{subfigure}{0}
\subfigure[Run time]{
    \includegraphics[trim={0 0.2in 0 0}, clip, width=0.22\textwidth]{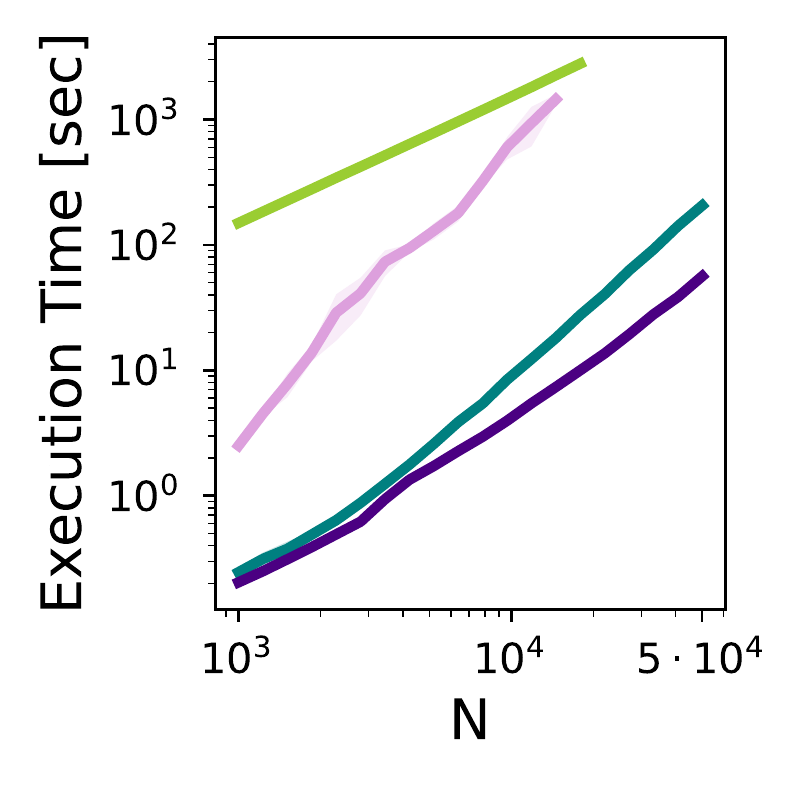}
    \label{fig:tori_time}
}
\hspace{-0.2in}
\subfigure[Normalized Distance]{
    \includegraphics[trim={0 0.2in 0 0}, clip, width=0.22\textwidth]{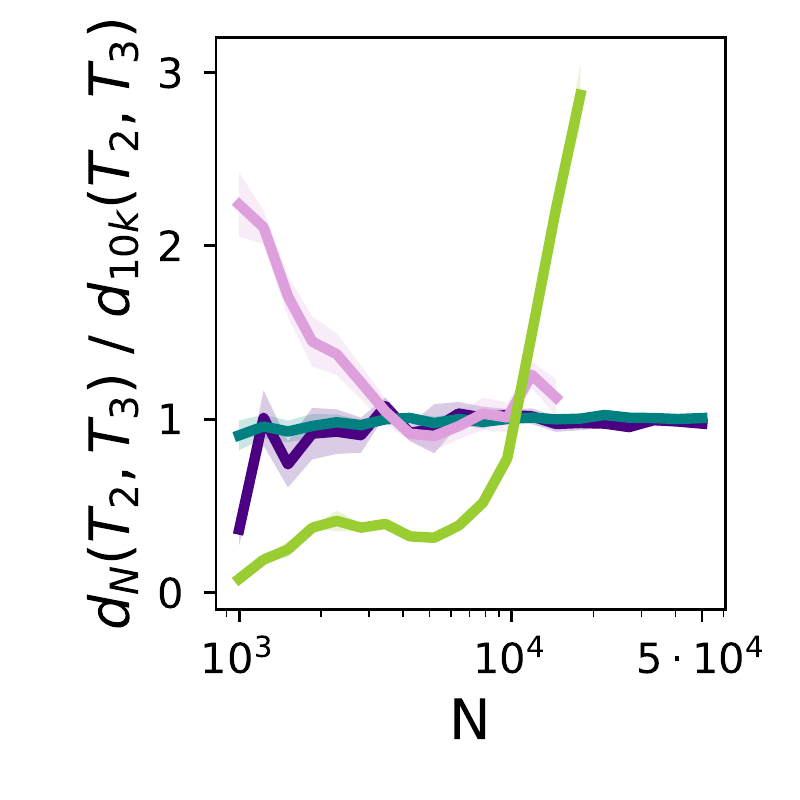}
    \label{fig:tori_stab}
}
\vskip -0.15in 
\caption{Run time and stability for different dataset sizes.\label{fig:tori_time_stab}\vspace{-.2in}}
\end{figure}

\subsection{Single Cell Gene Expression Data Analysis\label{sub:exp_genes}}
We further illustrate the power of LES in recovering fine structural details on single-cell RNA sequencing (scRNA-seq) data.
ScRNA-seq is powerful for analyzing cell populations and developmental trajectories \citep{tanay2017scaling}.
Recent works propose visualization techniques of such data for the analysis/comparison of cell sub-populations \citep{haghverdi2015diffusion, moon2019visualizing}.
However, comparing datasets of scRNA-seq in different gene spaces, e.g., from different species or measurement modalities, remains a challenge \citep{shafer2019cross}.
Since in such tasks a clear ground truth is typically unavailable, we demonstrate the applicability of LES to identifying intrinsic structural similarities between gene-expression datasets via an illustrative task of recovering the time axis in cell differentiation data. 
In such experiments, all the cells start from a stem cell-like state and differentiate into various cell types. 
The gene-expressions of (random, non-repeating) subsets of these cells are measured at different time points throughout the developmental trajectory.
Therefore, the time axis relates to the developmental stage of the cells and can indicate the expected similarity of the different subsets.
This application inherently lacks pointwise alignment between samples, since scRNA-seq destroys the cell during measurement.

We apply LES to data of reprogrammed mouse embryonic cells \citep{schiebinger2019optimal}, used for learning developmental trajectories.
It contains expressions of $\sim\!2.5\!\cdot\! 10^5$ cells and $1500$ genes (after filtration), collected across $18$ days of differentiation in half day intervals, resulting in $39$ time points; a visualization is in Appendix \ref{app:gene_exp}.
Each time point consists of gene expressions of $2000\mbox{--}12000$ cells from the same developmental stage. 
Our goal is to measure the gene-space similarities of these different cell groups.
Each cell group (from every time point) is treated as a separate dataset, and we measure the distances between every pair of datasets using LES, IMD, Wasserstein distance (OT), GS, and MTD \citep{barannikov2021manifold}; we omit GW since it requires unreasonable processing times for all pairs of $39$ time points.
To visualize the resulting distance matrices, we compute their embeddings using diffusion maps \citep{coifman2006diffusion}. Figure \ref{fig:gene_dist} presents the leading embedded coordinates, denoted by $f_1$ and $f_2$, where each point corresponds to a cell group from one time. The points are colored according to their measurement day, denoting their location along the developmental trajectory.
The correlations between the time axis and the distances are below each plot.
LES captures the developmental trajectory significantly better than IMD, GS, and MTD, as emphasized by their correlation with time and plot coloring. 
LES obtains a slightly higher correlation with time than OT, even though OT uses geometric proximity of the datasets while LES is intrinsic.

To slightly complicate the problem, we repeat this experiment and sample a different subset of genes at each time point ($\sim \mathrm{U}[1200,1500]$), so that cells from different days are represented in gene spaces of different dimensions.
LES outperforms IMD in this setting as well, with slightly reduced correlations of $0.69$ (LES) and $0.27$ (IMD).
Computation of OT and MTD in this setting is not possible, since cell groups from different time points are in different domains (different gene subsets and dimensionality).
These results suggest that LES could be highly useful in comparing cell population structures from different gene spaces.

\begin{figure*}[ht]
\centering
\subfigure{
\includegraphics[width=0.3\textwidth]{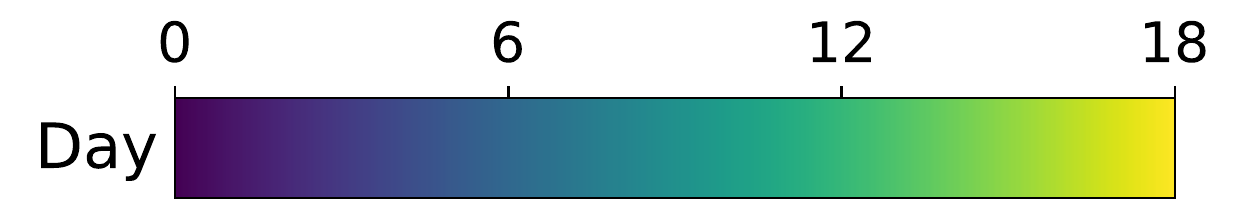}
} \vspace{-0.1in}

\setcounter{subfigure}{0}
\subfigure[LES: $0.74$]{
    \includegraphics[trim={0 0.2in 0 0}, clip, width=0.16\textwidth]{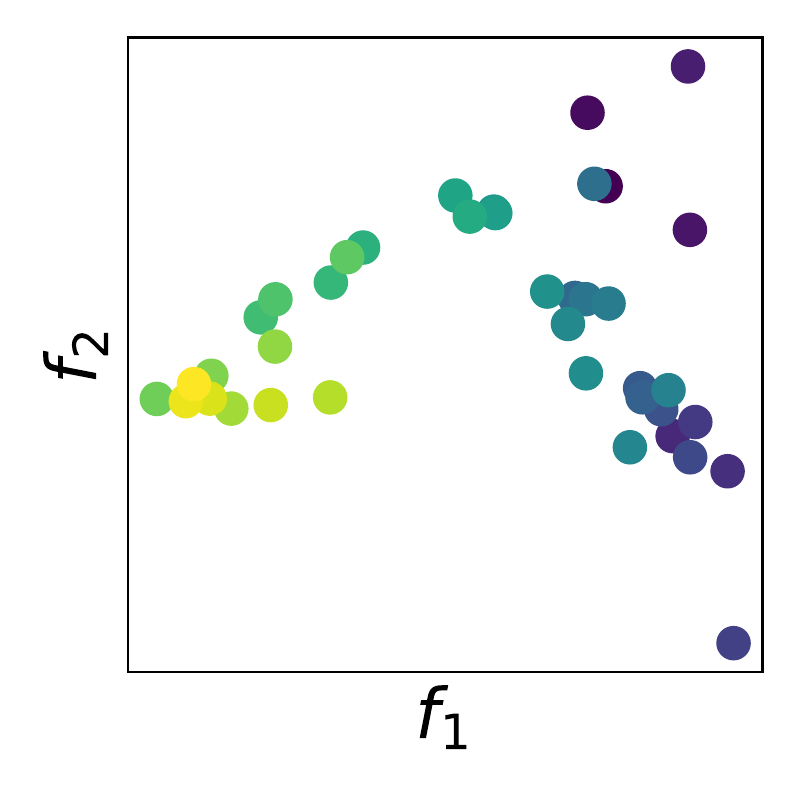}
    \label{fig:gene_les}
}
\subfigure[IMD: $0.28$]{
    \includegraphics[trim={0 0.2in 0 0}, clip, width=0.16\textwidth]{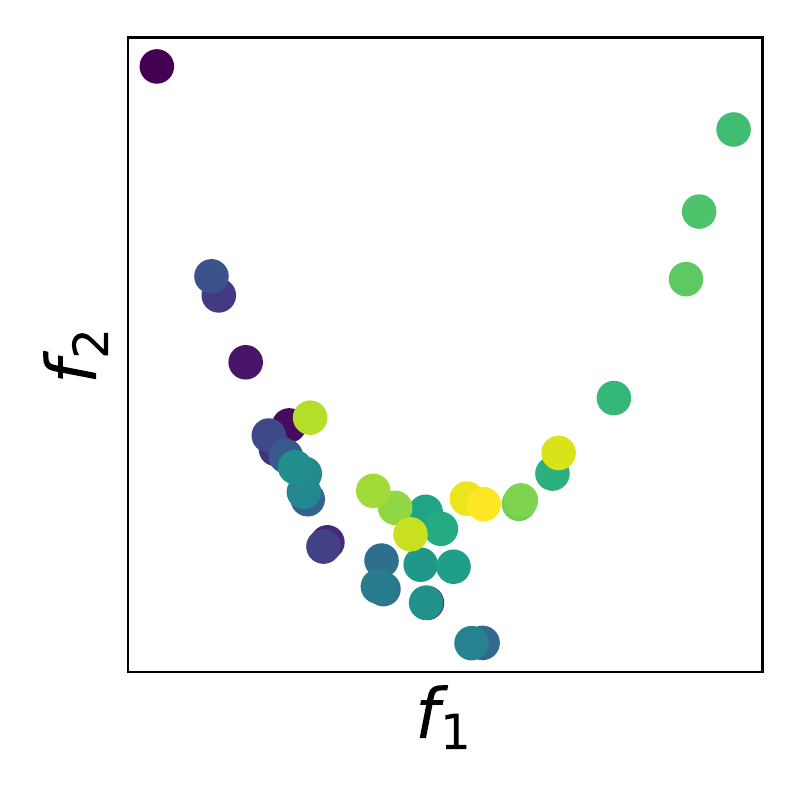}
    \label{fig:gene_imd}
}
\subfigure[OT: $0.6$]{
    \includegraphics[trim={0 0.2in 0 0}, clip, width=0.16\textwidth]{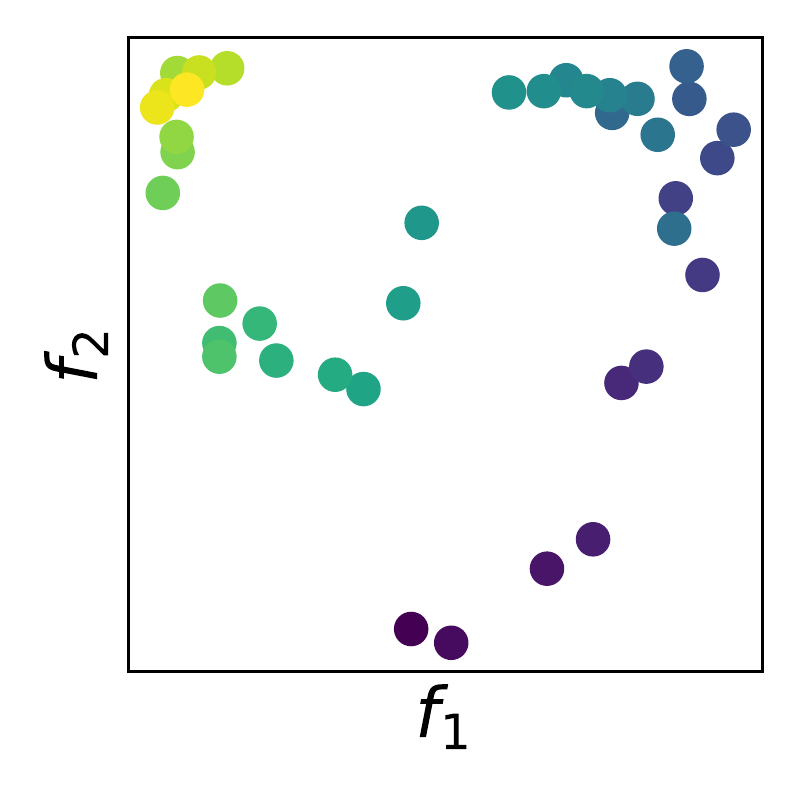}
    \label{fig:gene_ot}
}
\subfigure[GS: $0.03$]{
    \includegraphics[trim={0 0.2in 0 0}, clip, width=0.16\textwidth]{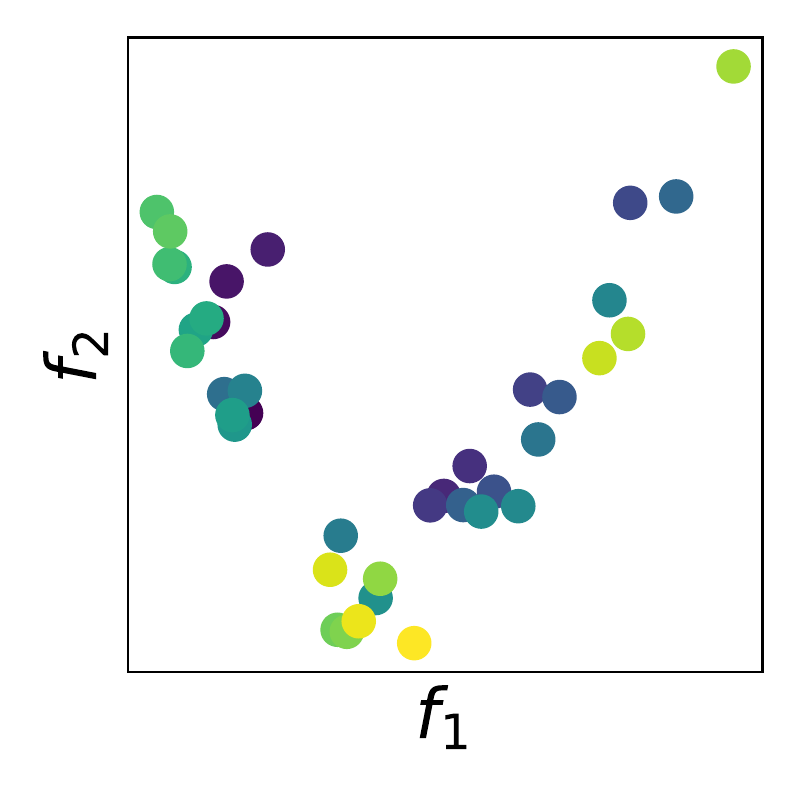}
    \label{fig:gene_gs}
}
\subfigure[MTD: $0.1$]{
    \includegraphics[trim={0 0.2in 0 0}, clip, width=0.16\textwidth]{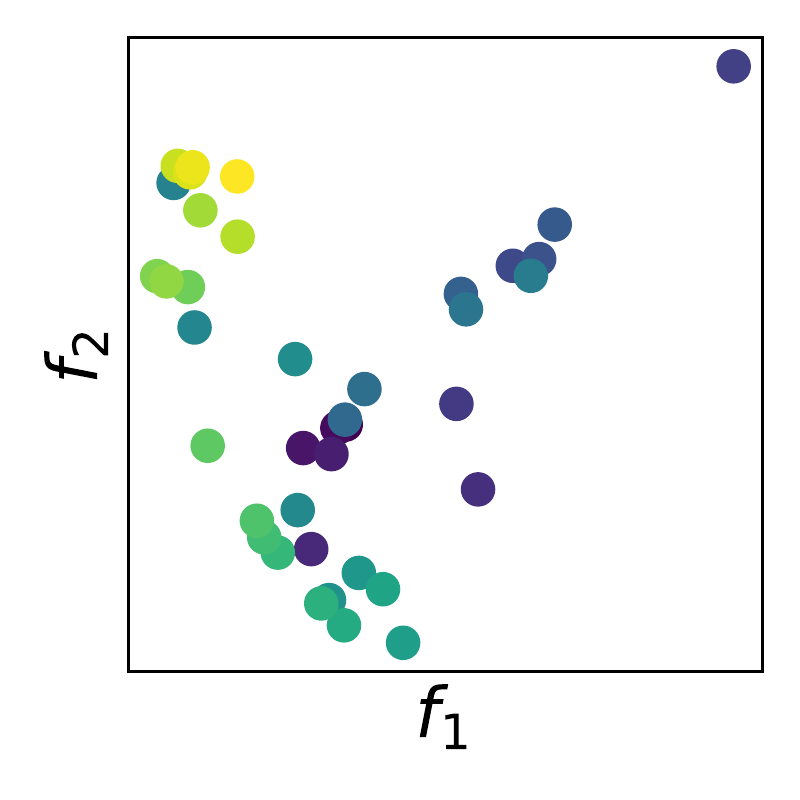}
    \label{fig:gene_mtd}
}
\caption{Gene-expression datasets embedding based on the different distances. Correlation with time is reported by the method labels.\label{fig:gene_dist}\vspace{-.1in}}
\end{figure*}

\begin{table}[t]
\caption{Few-shot learning: MetaOptNet test accuracies and correlations (20 trials) with task dissimilarities.}
\label{table:meta-mean}
\begin{center}
\scalebox{.85}{
\begin{tabular}{lcccccc}
\toprule
{} &               FC100 1-shot &               FC100 5-shot &              FC100 10-shot &            CIFAR-FS 1-shot &            CIFAR-FS 5-shot &           CIFAR-FS 10-shot \\
\midrule
Acc. & 38.19$\pm$0.48\%   &    54.45$\pm$0.49\%     &    60.52$\pm$0.48\%     &    70.79$\pm$0.69\%  &   83.98$\pm$0.44\%  &   87.11$\pm$0.40\% \\
\midrule
LES &  \textbf{-0.934}$\pm$0.034 &  \textbf{-0.945}$\pm$0.032 &  \textbf{-0.935}$\pm$0.032 &           -0.671$\pm$0.128 &  \textbf{-0.698}$\pm$0.155 &  \textbf{-0.657}$\pm$0.175 \\
IMD &           -0.184$\pm$0.250 &           -0.303$\pm$0.225 &           -0.210$\pm$0.235 &           -0.151$\pm$0.225 &           -0.015$\pm$0.220 &           -0.032$\pm$0.281 \\
OT  &            0.739$\pm$0.102 &            0.605$\pm$0.126 &            0.579$\pm$0.133 &            0.453$\pm$0.170 &            0.269$\pm$0.214 &            0.180$\pm$0.231 \\
GW  &           -0.919$\pm$0.034 &           -0.921$\pm$0.046 &           -0.914$\pm$0.045 &  \textbf{-0.677}$\pm$0.126 &           -0.672$\pm$0.167 &           -0.582$\pm$0.171 \\
GS  &           -0.735$\pm$0.101 &           -0.570$\pm$0.154 &           -0.581$\pm$0.218 &           -0.384$\pm$0.201 &           -0.034$\pm$0.258 &           -0.012$\pm$0.281 \\
\bottomrule
\end{tabular}
}
\end{center}
\vspace{-.1in} 
\end{table}

\subsection{Identifying Difficult Tasks in Few-Shot Learning}
\label{sec:meta-learning}

Meta-learning algorithms, e.g., Model Agnostic Meta Learning \citep{finn2017model} and MetaOptNet \citep{lee2019meta}, enable training models that can adapt to new tasks. In few-shot learning, they obtain classifiers for previously unseen classes using as little as a single labeled example per class. However, meta-learning can fail on tasks that are not sufficiently similar to the training tasks \citep{fallah2021generalization}.
To anticipate such failures a practitioner may consider measuring dissimilarity between the tasks used during training and the incoming test task. A good dissimilarity score correlates negatively with the performance on tasks to provide guidance on whether it is safe to deploy a meta-learning model on a new task without collecting labels. Namely, in such a setting we aim to predict whether a specific embedder (NN) will result in high few-shot classification accuracy in an unsupervised manner. 
Our study complements recent literature on meta-learning safety \citep{goldblum2020adversarially,slack2020fairness,agarwal2021sensitivity}.

Suppose we have several training tasks (datasets) $\{\widetilde{X}^{(\ell)} \!\in\! \mathbb{R}^{N_\ell \times d}\}_{\ell=1}^n$ and a new task $\widetilde{X}' \!\in\! \mathbb{R}^{N' \times d}$.
We train MetaOptNet-SVM \citep{lee2019meta} for benchmark datasets CIFAR-FS \citep{bertinetto2018meta} and FC100 \citep{oreshkin2018tadam} in 1-, 5-, and 10-shot settings. Its architecture has a feature extractor shared across tasks, generating task embeddings denoted by $X^{(\ell)}$ and $X'$ for train and new tasks, respectively, and a classification head for each new task, learned during adaptation for evaluating accuracy.
We compute dissimilarity between embeddings of the new and train tasks as $\mathcal{A}(\{d(X^{(\ell)}, X')\}_{\ell=1}^n)$, where $\mathcal{A}$ is some aggregator, e.g.\ the mean, and $d(\cdot,\cdot)$ is a dataset distance.

For each benchmark and shot setting we sample several new tasks from train/test task distributions and their mixtures. 
In Table \ref{table:meta-mean} we report the correlation between the task accuracy and the dissimilarity score (with $\mathcal{A}$ as the mean) based on various dataset distances. We compare LES against IMD, GW, GS and Wasserstein distance (OT) (here the dimensionality of all datasets is the same, so OT can be applied). LES outperforms IMD, GS and OT in all settings by a large margin. Performance of GW is comparable, but it is orders of magnitude slower. The accuracy on the test tasks is also reported in Table \ref{table:meta-mean}, which clearly distinguishes the two datasets (accuracy on train tasks for both is above 90\% in all settings). FC100 is a more challenging variant of CIFAR-FS that was created to reduce similarity between train and test tasks \citep{oreshkin2018tadam}. As demonstrated by the correlations for LES, which are close to $-1$ (ideal), LES captures the train/test task dissimilarity in FC100 and can be used to anticipate poor performance of meta-learners. On the other hand, CIFAR-FS train and test tasks are more similar, as indicated by the 
high accuracy, and thus are harder to distinguish; the correlation is worse for all dataset distances. We report experiment details, run-time comparison, and additional results in Appendix \ref{supp:meta-learning}.

We note some related works. \citet{achille2019task2vec} and \citet{nguyen2021similarity} proposed methods for embedding meta-learning tasks using gradients of an auxiliary neural network and a specialized topic model correspondingly. Both methods require labeled examples to produce the embeddings, while LES is unsupervised. \citet{venkitaraman2020task} and \citet{zhou2021task} incorporated task similarity into training of meta-learners. Their approaches learn task descriptors during training and require specialized architectures and loss functions. Using LES for similarity-aware training of meta-learners can be a promising direction to explore.


\section{Conclusion}
We proposed the LES distance, a geometric, data-driven intrinsic distance for unaligned datasets of different dimensions that lower-bounds the log-Euclidean metric between SPD matrices. 
LES provides a  stable and efficient distance measure with capabilities of recovering meaningful structural differences on gene expression data and NN embeddings.
Due to LES's generality, it can be applied to a wide range of other tasks and datasets.
In particular, our results on task similarity in few-shot learning suggest applicability to quantifying distribution shifts. Realistic distribution shifts can cause failures of many existing machine learning algorithms \citep{koh2021wilds,santurkar2020breeds}. Using LES to anticipate and diagnose poor out-of-distribution generalization is an interesting future work direction.

\section*{Acknowledgements}
We thank the anonymous reviewers for their valuable feedback and helpful recommendations.
We also wish to thank Madeleine Udell for useful suggestions.
The MIT Geometric Data Processing group acknowledges the generous support of Army Research Office grants W911NF2010168 and W911NF2110293, of Air Force Office of Scientific Research award FA9550-19-1-031, of National Science Foundation grants IIS-1838071 and CHS-1955697, from the CSAIL Systems that Learn program, from the MIT--IBM Watson AI Laboratory, from the Toyota--CSAIL Joint Research Center, and from a gift from Adobe Systems.
TS acknowledges the generous support of the Schmidt Futures Israeli Women's Postdoctoral Award and the Viterbi Fellowship, Technion.

\newpage
\bibliography{papers}

\begin{thebibliography}{}

\bibitem[Achille et~al., 2019]{achille2019task2vec}
Achille, A., Lam, M., Tewari, R., Ravichandran, A., Maji, S., Fowlkes, C.~C.,
  Soatto, S., and Perona, P. (2019).
\newblock Task2vec: Task embedding for meta-learning.
\newblock In {\em Proceedings of the IEEE/CVF International Conference on
  Computer Vision}, pages 6430--6439.

\bibitem[Agarwal et~al., 2021]{agarwal2021sensitivity}
Agarwal, M., Yurochkin, M., and Sun, Y. (2021).
\newblock On sensitivity of meta-learning to support data.
\newblock {\em Advances in Neural Information Processing Systems}, 34.

\bibitem[Arsigny et~al., 2006]{arsigny2006log}
Arsigny, V., Fillard, P., Pennec, X., and Ayache, N. (2006).
\newblock Log-{E}uclidean metrics for fast and simple calculus on diffusion
  tensors.
\newblock {\em Magnetic Resonance in Medicine: An Official Journal of the
  International Society for Magnetic Resonance in Medicine}, 56(2):411--421.

\bibitem[Arsigny et~al., 2007]{arsigny2007geometric}
Arsigny, V., Fillard, P., Pennec, X., and Ayache, N. (2007).
\newblock Geometric means in a novel vector space structure on symmetric
  positive-definite matrices.
\newblock {\em SIAM journal on matrix analysis and applications},
  29(1):328--347.

\bibitem[Barachant et~al., 2013]{barachant2013classification}
Barachant, A., Bonnet, S., Congedo, M., and Jutten, C. (2013).
\newblock Classification of covariance matrices using a {R}iemannian-based
  kernel for {BCI} applications.
\newblock {\em Neurocomputing}, 112:172--178.

\bibitem[Barannikov et~al., 2021]{barannikov2021manifold}
Barannikov, S., Trofimov, I., Sotnikov, G., Trimbach, E., Korotin, A.,
  Filippov, A., and Burnaev, E. (2021).
\newblock Manifold topology divergence: a framework for comparing data
  manifolds.
\newblock {\em Advances in Neural Information Processing Systems}, 34.

\bibitem[Barua et~al., 2019]{barua2019quality}
Barua, S., Ma, X., Erfani, S.~M., Houle, M.~E., and Bailey, J. (2019).
\newblock Quality evaluation of {GAN}s using cross local intrinsic
  dimensionality.
\newblock {\em arXiv preprint arXiv:1905.00643}.

\bibitem[Belkin and Niyogi, 2003]{belkin2003laplacian}
Belkin, M. and Niyogi, P. (2003).
\newblock {L}aplacian {E}igenmaps for dimensionality reduction and data
  representation.
\newblock {\em Neural computation}, 15(6):1373--1396.

\bibitem[Belkin and Niyogi, 2007]{belkin2007convergence}
Belkin, M. and Niyogi, P. (2007).
\newblock Convergence of {L}aplacian {E}igenmaps.
\newblock {\em Advances in Neural Information Processing Systems}, 19:129.

\bibitem[Bertinetto et~al., 2018]{bertinetto2018meta}
Bertinetto, L., Henriques, J.~F., Torr, P., and Vedaldi, A. (2018).
\newblock Meta-learning with differentiable closed-form solvers.
\newblock In {\em International Conference on Learning Representations}.

\bibitem[Bhatia, 2013]{bhatia2013matrix}
Bhatia, R. (2013).
\newblock {\em Matrix analysis}, volume 169.
\newblock Springer Science \& Business Media.

\bibitem[Bonnabel and Sepulchre, 2010]{bonnabel2010riemannian}
Bonnabel, S. and Sepulchre, R. (2010).
\newblock Riemannian metric and geometric mean for positive semidefinite
  matrices of fixed rank.
\newblock {\em SIAM Journal on Matrix Analysis and Applications},
  31(3):1055--1070.

\bibitem[Cela, 2013]{cela2013quadratic}
Cela, E. (2013).
\newblock {\em The quadratic assignment problem: theory and algorithms},
  volume~1.
\newblock Springer Science \& Business Media.

\bibitem[Coifman and Hirn, 2014]{coifman2014diffusion}
Coifman, R.~R. and Hirn, M.~J. (2014).
\newblock Diffusion maps for changing data.
\newblock {\em Applied and computational harmonic analysis}, 36(1):79--107.

\bibitem[Coifman and Lafon, 2006]{coifman2006diffusion}
Coifman, R.~R. and Lafon, S. (2006).
\newblock Diffusion maps.
\newblock {\em Applied and computational harmonic analysis}, 21(1):5--30.

\bibitem[Collard et~al., 2014]{collard2014anisotropy}
Collard, A., Bonnabel, S., Phillips, C., and Sepulchre, R. (2014).
\newblock Anisotropy preserving dti processing.
\newblock {\em International journal of computer vision}, 107(1):58--74.

\bibitem[Cosmo et~al., 2019]{cosmo2019isospectralization}
Cosmo, L., Panine, M., Rampini, A., Ovsjanikov, M., Bronstein, M.~M., and
  Rodola, E. (2019).
\newblock Isospectralization, or how to hear shape, style, and correspondence.
\newblock In {\em Proceedings of the IEEE/CVF Conference on Computer Vision and
  Pattern Recognition}, pages 7529--7538.

\bibitem[Ding and Wu, 2020]{ding2020phase}
Ding, X. and Wu, H.-T. (2020).
\newblock Impact of signal-to-noise ratio and bandwidth on graph laplacian
  spectrum from high-dimensional noisy point cloud.
\newblock {\em arXiv preprint arXiv:2011.10725}.

\bibitem[Dunson et~al., 2021]{dunson2021spectral}
Dunson, D.~B., Wu, H.-T., and Wu, N. (2021).
\newblock Spectral convergence of graph {L}aplacian and heat kernel
  reconstruction in {$L^\infty$} from random samples.
\newblock {\em Applied and Computational Harmonic Analysis}.

\bibitem[Edelsbrunner et~al., 2008]{edelsbrunner2008persistent}
Edelsbrunner, H., Harer, J., et~al. (2008).
\newblock Persistent homology-a survey.
\newblock {\em Contemporary mathematics}, 453:257--282.

\bibitem[Fallah et~al., 2021]{fallah2021generalization}
Fallah, A., Mokhtari, A., and Ozdaglar, A. (2021).
\newblock Generalization of model-agnostic meta-learning algorithms: Recurring
  and unseen tasks.
\newblock In {\em Advances in Neural Information Processing Systems}.

\bibitem[Fefferman et~al., 2016]{fefferman2016testing}
Fefferman, C., Mitter, S., and Narayanan, H. (2016).
\newblock Testing the manifold hypothesis.
\newblock {\em Journal of the American Mathematical Society}, 29(4):983--1049.

\bibitem[Finn et~al., 2017]{finn2017model}
Finn, C., Abbeel, P., and Levine, S. (2017).
\newblock Model-agnostic meta-learning for fast adaptation of deep networks.
\newblock In {\em International Conference on Machine Learning}, pages
  1126--1135. PMLR.

\bibitem[Goldblum et~al., 2020]{goldblum2020adversarially}
Goldblum, M., Fowl, L., and Goldstein, T. (2020).
\newblock Adversarially robust few-shot learning: A meta-learning approach.
\newblock {\em Advances in Neural Information Processing Systems}, 33.

\bibitem[Gretton et~al., 2012]{gretton2012kernel}
Gretton, A., Borgwardt, K.~M., Rasch, M.~J., Sch{\"o}lkopf, B., and Smola, A.
  (2012).
\newblock A kernel two-sample test.
\newblock {\em The Journal of Machine Learning Research}, 13(1):723--773.

\bibitem[Guan et~al., 2019]{guan2019covariance}
Guan, D., Xiang, D., Tang, X., Wang, L., and Kuang, G. (2019).
\newblock Covariance of textural features: a new feature descriptor for {SAR}
  image classification.
\newblock {\em IEEE Journal of Selected Topics in Applied Earth Observations
  and Remote Sensing}, 12(10):3932--3942.

\bibitem[Ha~Quang et~al., 2016]{minh2016approximate}
Ha~Quang, M., Biagio, M.~S., Bazzani, L., and Murino, V. (2016).
\newblock Approximate log-{H}ilbert-{S}chmidt distances between covariance
  operators for image classification.
\newblock In {\em Proceedings of the IEEE Conference on Computer Vision and
  Pattern Recognition}, pages 5195--5203.

\bibitem[Ha~Quang et~al., 2014]{quang2014log}
Ha~Quang, M., San~Biagio, M., and Murino, V. (2014).
\newblock Log-{H}ilbert-{S}chmidt metric between positive definite operators on
  {H}ilbert spaces.
\newblock In {\em Advances in neural information processing systems}, pages
  388--396.

\bibitem[Haghverdi et~al., 2015]{haghverdi2015diffusion}
Haghverdi, L., Buettner, F., and Theis, F.~J. (2015).
\newblock Diffusion maps for high-dimensional single-cell analysis of
  differentiation data.
\newblock {\em Bioinformatics}, 31(18):2989--2998.

\bibitem[Heusel et~al., 2017]{heusel2017gans}
Heusel, M., Ramsauer, H., Unterthiner, T., Nessler, B., and Hochreiter, S.
  (2017).
\newblock {GAN}s trained by a two time-scale update rule converge to a local
  {N}ash equilibrium.
\newblock {\em Advances in neural information processing systems}, 30.

\bibitem[Huang et~al., 2015]{huang2015log}
Huang, Z., Wang, R., Shan, S., Li, X., and Chen, X. (2015).
\newblock Log-{E}uclidean metric learning on symmetric positive definite
  manifold with application to image set classification.
\newblock In {\em International conference on machine learning}, pages
  720--729. PMLR.

\bibitem[Jacomy et~al., 2014]{jacomy2014forceatlas2}
Jacomy, M., Venturini, T., Heymann, S., and Bastian, M. (2014).
\newblock Force{A}tlas2, a continuous graph layout algorithm for handy network
  visualization designed for the {G}ephi software.
\newblock {\em PloS one}, 9(6):e98679.

\bibitem[Kac, 1966]{kac1966can}
Kac, M. (1966).
\newblock Can one hear the shape of a drum?
\newblock {\em The american mathematical monthly}, 73(4P2):1--23.

\bibitem[Katz et~al., 2020]{katz2020spectral}
Katz, O., Lederman, R.~R., and Talmon, R. (2020).
\newblock Spectral flow on the manifold of {SPD} matrices for multimodal data
  processing.
\newblock {\em arXiv preprint arXiv:2009.08062}.

\bibitem[Khrulkov and Oseledets, 2018]{khrulkov2018geometry}
Khrulkov, V. and Oseledets, I. (2018).
\newblock Geometry score: A method for comparing generative adversarial
  networks.
\newblock In {\em International Conference on Machine Learning}, pages
  2621--2629. PMLR.

\bibitem[Koh et~al., 2021]{koh2021wilds}
Koh, P.~W., Sagawa, S., Marklund, H., Xie, S.~M., Zhang, M., Balsubramani, A.,
  Hu, W., Yasunaga, M., Phillips, R.~L., Gao, I., et~al. (2021).
\newblock {WILDS}: A benchmark of in-the-wild distribution shifts.
\newblock In {\em International Conference on Machine Learning}, pages
  5637--5664. PMLR.

\bibitem[Kornblith et~al., 2019]{kornblith2019similarity}
Kornblith, S., Norouzi, M., Lee, H., and Hinton, G. (2019).
\newblock Similarity of neural network representations revisited.
\newblock In {\em International Conference on Machine Learning}, pages
  3519--3529. PMLR.

\bibitem[Krizhevsky et~al., 2009]{krizhevsky2009learning}
Krizhevsky, A., Hinton, G., et~al. (2009).
\newblock Learning multiple layers of features from tiny images.

\bibitem[Lee et~al., 2019]{lee2019meta}
Lee, K., Maji, S., Ravichandran, A., and Soatto, S. (2019).
\newblock Meta-learning with differentiable convex optimization.
\newblock In {\em Proceedings of the IEEE/CVF Conference on Computer Vision and
  Pattern Recognition}, pages 10657--10665.

\bibitem[Li et~al., 2017]{li2017algorithm}
Li, H., Linderman, G.~C., Szlam, A., Stanton, K.~P., Kluger, Y., and Tygert, M.
  (2017).
\newblock Algorithm 971: An implementation of a randomized algorithm for
  principal component analysis.
\newblock {\em ACM Transactions on Mathematical Software (TOMS)}, 43(3):1--14.

\bibitem[Mead, 1992]{mead1992review}
Mead, A. (1992).
\newblock Review of the development of multidimensional scaling methods.
\newblock {\em Journal of the Royal Statistical Society: Series D (The
  Statistician)}, 41(1):27--39.

\bibitem[M{\'e}moli, 2011]{memoli2011spectral}
M{\'e}moli, F. (2011).
\newblock A spectral notion of {G}romov--{W}asserstein distance and related
  methods.
\newblock {\em Applied and Computational Harmonic Analysis}, 30(3):363--401.

\bibitem[Mishne and Cohen, 2012]{mishne2012multiscale}
Mishne, G. and Cohen, I. (2012).
\newblock Multiscale anomaly detection using diffusion maps.
\newblock {\em IEEE Journal of selected topics in signal processing},
  7(1):111--123.

\bibitem[Moon et~al., 2019]{moon2019visualizing}
Moon, K.~R., van Dijk, D., Wang, Z., Gigante, S., Burkhardt, D.~B., Chen,
  W.~S., Yim, K., van~den Elzen, A., Hirn, M.~J., Coifman, R.~R., et~al.
  (2019).
\newblock Visualizing structure and transitions in high-dimensional biological
  data.
\newblock {\em Nature biotechnology}, 37(12):1482--1492.

\bibitem[Nguyen et~al., 2020]{nguyen2021similarity}
Nguyen, C., Do, T.-T., and Carneiro, G. (2020).
\newblock Similarity of classification tasks.
\newblock In {\em NeurIPS Meta Learning Workshop}.

\bibitem[Oreshkin et~al., 2018]{oreshkin2018tadam}
Oreshkin, B.~N., Rodr\'{\i}guez, P., and Lacoste, A. (2018).
\newblock {TADAM}: Task dependent adaptive metric for improved few-shot
  learning.
\newblock In {\em Advances in Neural Information Processing Systems},
  volume~31.

\bibitem[Pennec et~al., 2006]{pennec2006riemannian}
Pennec, X., Fillard, P., and Ayache, N. (2006).
\newblock A {R}iemannian framework for tensor computing.
\newblock {\em International Journal of computer vision}, 66(1):41--66.

\bibitem[Peyr{\'e} et~al., 2016]{peyre2016gromov}
Peyr{\'e}, G., Cuturi, M., and Solomon, J. (2016).
\newblock {G}romov-{W}asserstein averaging of kernel and distance matrices.
\newblock In {\em International Conference on Machine Learning}, pages
  2664--2672. PMLR.

\bibitem[Reuter et~al., 2006]{reuter2006laplace}
Reuter, M., Wolter, F.-E., and Peinecke, N. (2006).
\newblock {L}aplace--{B}eltrami spectra as ‘{S}hape-{DNA}’of surfaces and
  solids.
\newblock {\em Computer-Aided Design}, 38(4):342--366.

\bibitem[Santurkar et~al., 2021]{santurkar2020breeds}
Santurkar, S., Tsipras, D., and Madry, A. (2021).
\newblock {BREEDS}: Benchmarks for subpopulation shift.
\newblock In {\em International Conference on Learning Representations}.

\bibitem[Schiebinger et~al., 2019]{schiebinger2019optimal}
Schiebinger, G., Shu, J., Tabaka, M., Cleary, B., Subramanian, V., Solomon, A.,
  Gould, J., Liu, S., Lin, S., Berube, P., et~al. (2019).
\newblock Optimal-transport analysis of single-cell gene expression identifies
  developmental trajectories in reprogramming.
\newblock {\em Cell}, 176(4):928--943.

\bibitem[Schrier, 2020]{schrier2020can}
Schrier, J. (2020).
\newblock Can one hear the shape of a molecule (from its {C}oulomb matrix
  eigenvalues)?
\newblock {\em Journal of Chemical Information and Modeling}, 60(8):3804--3811.

\bibitem[Shafer, 2019]{shafer2019cross}
Shafer, M.~E. (2019).
\newblock Cross-species analysis of single-cell transcriptomic data.
\newblock {\em Frontiers in cell and developmental biology}, 7:175.

\bibitem[Shnitzer et~al., 2022]{shnitzer2022spatio}
Shnitzer, T., Wu, H.-T., and Talmon, R. (2022).
\newblock Spatiotemporal analysis using {R}iemannian composition of diffusion
  operators.
\newblock {\em arXiv preprint arXiv:2201.08530}.

\bibitem[Slack et~al., 2020]{slack2020fairness}
Slack, D., Friedler, S.~A., and Givental, E. (2020).
\newblock Fairness warnings and {Fair-MAML}: learning fairly with minimal data.
\newblock In {\em Proceedings of the 2020 Conference on Fairness,
  Accountability, and Transparency}, pages 200--209.

\bibitem[Springenberg et~al., 2015]{springenberg2014striving}
Springenberg, J.~T., Dosovitskiy, A., Brox, T., and Riedmiller, M. (2015).
\newblock Striving for simplicity: The all convolutional net.
\newblock In {\em International Conference on Learning Representations
  Workshop}.

\bibitem[Szegedy et~al., 2015]{szegedy2015going}
Szegedy, C., Liu, W., Jia, Y., Sermanet, P., Reed, S., Anguelov, D., Erhan, D.,
  Vanhoucke, V., and Rabinovich, A. (2015).
\newblock Going deeper with convolutions.
\newblock In {\em Proceedings of the IEEE conference on computer vision and
  pattern recognition}, pages 1--9.

\bibitem[Talmon et~al., 2013]{talmon2013diffusion}
Talmon, R., Cohen, I., Gannot, S., and Coifman, R.~R. (2013).
\newblock Diffusion maps for signal processing: A deeper look at
  manifold-learning techniques based on kernels and graphs.
\newblock {\em IEEE signal processing magazine}, 30(4):75--86.

\bibitem[Talmon and Coifman, 2013]{talmon2013empirical}
Talmon, R. and Coifman, R.~R. (2013).
\newblock Empirical intrinsic geometry for nonlinear modeling and time series
  filtering.
\newblock {\em Proceedings of the National Academy of Sciences},
  110(31):12535--12540.

\bibitem[Tanay and Regev, 2017]{tanay2017scaling}
Tanay, A. and Regev, A. (2017).
\newblock Scaling single-cell genomics from phenomenology to mechanism.
\newblock {\em Nature}, 541(7637):331--338.

\bibitem[Tralie et~al., 2018]{ctralie2018ripser}
Tralie, C., Saul, N., and Bar-On, R. (2018).
\newblock {Ripser.py}: A lean persistent homology library for python.
\newblock {\em The Journal of Open Source Software}, 3(29):925.

\bibitem[Tropp et~al., 2017]{tropp2017fixed}
Tropp, J.~A., Yurtsever, A., Udell, M., and Cevher, V. (2017).
\newblock Fixed-rank approximation of a positive-semidefinite matrix from
  streaming data.
\newblock In {\em Advances in Neural Information Processing Systems},
  volume~30.

\bibitem[Tsitsulin et~al., 2018]{tsitsulin2018netlsd}
Tsitsulin, A., Mottin, D., Karras, P., Bronstein, A., and M{\"u}ller, E.
  (2018).
\newblock {NetLSD}: hearing the shape of a graph.
\newblock In {\em Proceedings of the 24th ACM SIGKDD International Conference
  on Knowledge Discovery \& Data Mining}, pages 2347--2356.

\bibitem[Tsitsulin et~al., 2020]{tsitsulin2019shape}
Tsitsulin, A., Munkhoeva, M., Mottin, D., Karras, P., Bronstein, A., Oseledets,
  I., and M{\"u}ller, E. (2020).
\newblock The shape of data: Intrinsic distance for data distributions.
\newblock In {\em International Conference on Learning Representations}.

\bibitem[Venkitaraman et~al., 2020]{venkitaraman2020task}
Venkitaraman, A., Hansson, A., and Wahlberg, B. (2020).
\newblock Task-similarity aware meta-learning through nonparametric kernel
  regression.
\newblock {\em arXiv preprint arXiv:2006.07212}.

\bibitem[Zhou et~al., 2021]{zhou2021task}
Zhou, P., Zou, Y., Yuan, X.-T., Feng, J., Xiong, C., and Hoi, S. (2021).
\newblock Task similarity aware meta learning: Theory-inspired improvement on
  {MAML}.
\newblock In {\em Uncertainty in Artificial Intelligence}, pages 23--33. PMLR.

\end{thebibliography}
\bibliographystyle{apalike}

\newpage
\appendix
\onecolumn
\section{Proof of the Second Bound in Proposition \ref{prop:evalest_bound}\label{app:prop2proof}}
We reiterate the claim of the second part of Proposition \ref{prop:evalest_bound} for convenience.
\setcounter{prop}{1}
\begin{prop}
The rank-$K$ approximation of a real symmetric PSD matrix, $\mathbf{W}$, computed according to Algorithm \ref{alg:evalest}, and denoted by $\hat{\mathbf{W}}_K$, satisfies the following error bounds.
\begin{enumerate}
\setcounter{enumi}{1}
\item The $\gamma$-regularized log-eigenvalue error is bounded by: 
\begin{flalign}
    & \mathbb{E}\!\left[\sum_{i=1}^K\left\vert\log\!\left(\lambda_i^{(\mathbf{W})}\!+\!\gamma\right)\!-\!\log\!\left(\lambda_i^{(\hat{\mathbf{W}}_K)}\!+\!\gamma\right)\right\vert\right]\leq\frac{1.5K}{(M-K-1)(\lambda_K^{(\mathbf{W})}+\gamma)}\!\sum_{i=K\!+\! 1}^N\!\!\lambda_i^{(\mathbf{W})}\nonumber
\end{flalign}
\end{enumerate}
for $\tfrac{|\lambda^{(\mathbf{W})}_i-\lambda^{(\hat{\mathbf{W}}_K)}_i|}{\lambda^{(\mathbf{W})}_i+\gamma}\!\leq\! 0.5828$, where the expectations are over the normal distribution (due to the random $\mathbf{\Omega}$ in Algorithm \ref{alg:evalest}), $\lambda_i^{(\mathbf{W})}$ and $\lambda_i^{(\hat{\mathbf{W}}_K)}$ denote the eigenvalues of $\mathbf{W}$ and $\hat{\mathbf{W}}_K$, respectively, organized in decreasing order, and $M$ denotes the number of random vectors (columns) in $\mathbf{\Omega}$.
\end{prop}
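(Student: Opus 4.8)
The plan is to reduce the log-eigenvalue error, summand by summand, to the plain eigenvalue error already controlled in the first part of the proposition (the bound in \eqref{eq:eigbound}), via an elementary scalar inequality for the logarithm. Write $a_i := \lambda_i^{(\mathbf{W})}+\gamma$ and $b_i := \lambda_i^{(\hat{\mathbf{W}}_K)}+\gamma$, both strictly positive since $\gamma>0$ and the eigenvalues are nonnegative. Each summand is then $\left|\log a_i - \log b_i\right| = \left|\log(b_i/a_i)\right| = \left|\log(1+x_i)\right|$, where $x_i := (b_i-a_i)/a_i = (\lambda_i^{(\hat{\mathbf{W}}_K)}-\lambda_i^{(\mathbf{W})})/(\lambda_i^{(\mathbf{W})}+\gamma)$. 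The hypothesis of the proposition is precisely the statement that $|x_i|\le 0.5828$ for every $i$.

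First I would establish the scalar bound $\left|\log(1+x)\right|\le 1.5\,|x|$ on the interval $|x|\le 0.5828$. For $x\ge 0$ the ratio $\log(1+x)/x$ is decreasing with supremum $1$ as $x\to 0^+$, so it is at most $1<1.5$ there. For $x<0$, writing $x=-t$ with $t\in(0,0.5828]$, the ratio $-\log(1-t)/t$ is increasing in $t$, so its maximum over the admissible interval is attained at the endpoint $t=0.5828$; a direct evaluation gives $-\log(1-0.5828)/0.5828\approx 1.5$. This is exactly why the threshold $0.5828$ appears in the statement: it is the value of $t$ at which the negative-side slope of the logarithm reaches $1.5$. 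Consequently $\left|\log(1+x_i)\right|\le 1.5\,|x_i|$ for each $i$.

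Substituting back gives $\left|\log a_i-\log b_i\right|\le 1.5\,\bigl|\lambda_i^{(\mathbf{W})}-\lambda_i^{(\hat{\mathbf{W}}_K)}\bigr|/(\lambda_i^{(\mathbf{W})}+\gamma)$. Since the eigenvalues are ordered decreasingly, for every $i\le K$ we have $\lambda_i^{(\mathbf{W})}\ge\lambda_K^{(\mathbf{W})}$, hence $(\lambda_i^{(\mathbf{W})}+\gamma)^{-1}\le(\lambda_K^{(\mathbf{W})}+\gamma)^{-1}$, which lets me factor the denominator out of the sum:
\[
\sum_{i=1}^K\left|\log a_i-\log b_i\right|\le \frac{1.5}{\lambda_K^{(\mathbf{W})}+\gamma}\sum_{i=1}^K\bigl|\lambda_i^{(\mathbf{W})}-\lambda_i^{(\hat{\mathbf{W}}_K)}\bigr|.
\]
Taking expectations over the random $\mathbf{\Omega}$ and invoking the first-part bound $\mathbb{E}\bigl[\sum_{i=1}^K|\lambda_i^{(\mathbf{W})}-\lambda_i^{(\hat{\mathbf{W}}_K)}|\bigr]\le \tfrac{K}{M-K-1}\sum_{i=K+1}^N\lambda_i^{(\mathbf{W})}$ yields the claimed estimate. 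The only genuine obstacle is the scalar logarithm inequality together with the bookkeeping that justifies the exact pairing of the constant $1.5$ with the threshold $0.5828$; the remaining steps are just monotonicity of the ordered spectrum and linearity of expectation.
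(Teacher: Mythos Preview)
Your proposal is correct and follows essentially the same route as the paper's proof: rewrite each summand as $|\log(1+x_i)|$ with $x_i=(\lambda_i^{(\hat{\mathbf{W}}_K)}-\lambda_i^{(\mathbf{W})})/(\lambda_i^{(\mathbf{W})}+\gamma)$, apply the scalar bound $|\log(1+x)|\le 1.5|x|$ on $|x|\le 0.5828$ (the paper states this as the pair $\log(1+x)\le x$ and $|\log(1-x)|\le 1.5x$), replace $\lambda_i^{(\mathbf{W})}+\gamma$ by $\lambda_K^{(\mathbf{W})}+\gamma$ via monotonicity, and then take expectations and invoke \eqref{eq:eigbound}. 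Your justification of the scalar inequality via monotonicity of $\log(1+x)/x$ and $-\log(1-t)/t$ is slightly more explicit than the paper's, but the argument is the same.
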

\begin{proof}
To bound the log-eigenvalues, we denote the $i$-th approximation error by $\lambda_i^{(\hat{\mathbf{W}}_K)}=\lambda_i^{(\mathbf{W})}\pm\epsilon_i$:
\begin{flalign}
    &\sum_{i=1}^K\left\vert\log\!\left(\lambda_i^{(\hat{\mathbf{W}}_K)}\!+\!\gamma\right)\!-\!\log\!\left(\lambda_i^{(\mathbf{W})}\!+\!\gamma\right)\right\vert=\sum_{i=1}^K\left\vert\log\!\left(\lambda_i^{(\mathbf{W})}\!\pm\!\epsilon_i\!+\!\gamma\right)\!-\!\log\!\left(\lambda_i^{(\mathbf{W})}\!+\!\gamma\right)\right\vert&&\nonumber\\
    =&\sum_{i=1}^K\left\vert\log\!\left(1\pm\frac{\epsilon_i}{\lambda_i^{(\mathbf{W})}\!+\!\gamma}\right)\right\vert
    \overset{(*)}{\leq} 1.5\sum_{i=1}^K\left\vert\frac{\epsilon_i}{\lambda_i^{(\mathbf{W})}\!+\!\gamma}\right\vert\overset{(**)}{\leq}\! 1.5\sum_{i=1}^K\left\vert\frac{\epsilon_i}{\lambda_K^{(\mathbf{W})}\!+\!\gamma}\right\vert\!=\!\frac{1.5}{\lambda_K^{(\mathbf{W})}\!+\!\gamma}\sum_{i=1}^K\left\vert\lambda_i^{(\mathbf{W})}-\lambda_i^{(\hat{\mathbf{W}}_K)}\right\vert&&\label{eq:logevalest_proof}
\end{flalign}
where transition $(*)$ is due to $\log(1\!+\!x)\!\leq\! x$ for $x\!\geq\! 0$ and $|\log(1\!-\!x)|\!\leq\! 1.5x$ for $0\!\leq\! x\!\leq\! 0.5828$, and $(**)$ is since the eigenvalues $\lambda^{(\mathbf{W})}_i$ are positive and decreasing. 
Therefore, this derivation holds when $\tfrac{|\lambda^{(\mathbf{W})}_i-\lambda^{(\hat{\mathbf{W}}_K)}_i|}{\lambda^{(\mathbf{W})}_i+\gamma}\!\leq\! 0.5828$.
Taking the expectation of \eqref{eq:logevalest_proof} and bounding it with $\eqref{eq:eigbound}$ yields the error bound in the proposition.
\end{proof}

\section{Tori Simulation: Additional Details and Results\label{app:tori_exp}}
\subsection{Tori Equations and Parameter Choices}
The tori embedding equations used to generate the data for the toy example in Subsection \ref{sub:exp_tori} are given by:
\begin{flalign}
    &T_{2}:
    \begin{Bmatrix}
    x[i] = & (R_1+R_2\cos(\theta_2[i]))\cos(\theta_1[i])\\ 
    y[i] = & (R_1+R_2\cos(\theta_2[i]))\sin(\theta_1[i])\\
    z[i] = & R_2\sin(\theta_2[i])
    \end{Bmatrix}\nonumber\\
    &T_{3}:\begin{Bmatrix}
    x[i] = & (R_1+(R_2+R_3\cos(\theta_3[i]))\cos(\theta_2[i]))\cos(\theta_1[i])\\ 
    y[i] = & (R_1+(R_2+R_3\cos(\theta_3[i]))\cos(\theta_2[i]))\sin(\theta_1[i])\\
    z[i] = & (R_2+R_3\cos(\theta_3[i]))\sin(\theta_2[i])\\
    w[i] = & R_3\sin(\theta_3[i])
    \end{Bmatrix}\label{appeq:tori_eqs}
\end{flalign}
where $\theta_1[i],\theta_2[i],\theta_3[i]\sim \mathrm{U}[0,2\pi]$, $i=1,\dots N$ and $R_1=10,\ R_2=3,\ R_3=1$.
$T_2^{Sc}$ was defined similarly to $T_2$ by replacing $R_2$ with $R_2^{Sc}=c\!\cdot\!R_2$ and $T_3^{Sc}$ was defined by replacing $R_3$ in $T_3$ with $R_3^{Sc}=c\!\cdot\!R_3$. 
Figure \ref{appfig:tori_vis} presents examples of data sampled from the $2$D torus and the scaled $2$D torus.

\begin{figure*}[ht]
\centering
\subfigure[]{
    \includegraphics[width=0.25\textwidth]{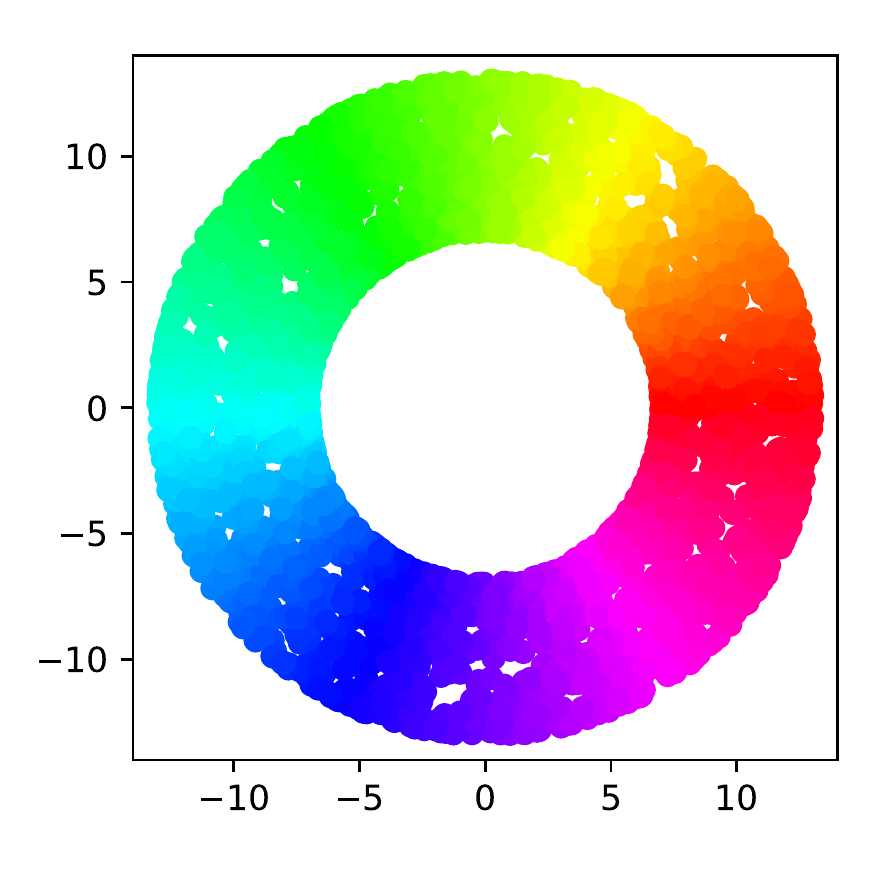}
}
\hspace{-0.2in}
\subfigure[]{
    \includegraphics[width=0.25\textwidth]{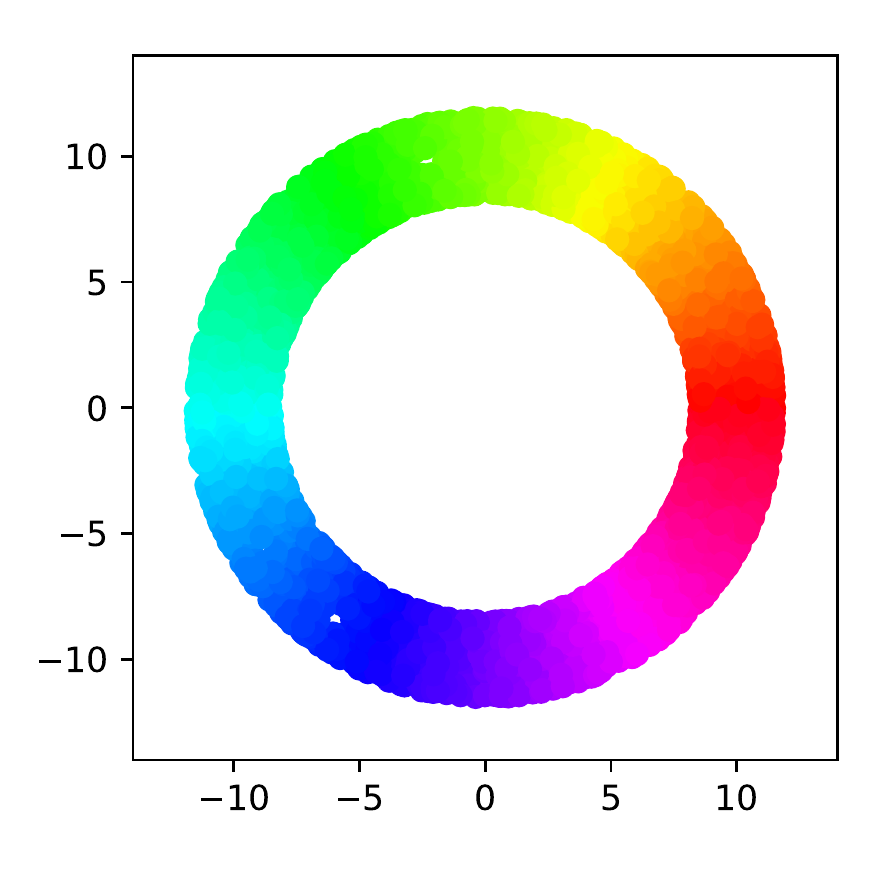}
}
\caption{Visualization of (a) the $2$D torus and (b) the scaled $2$D torus with $c=0.5$.\label{appfig:tori_vis}}
\end{figure*}

For \cref{fig:tori_dist}, we sampled $N=3000$ points of $\theta_1,\ \theta_2$ and $\theta_3$ in each torus independently, such that no known pointwise correspondence exists.
In our comparisons we used the default parameters for GW, IMD with the exact $k$-nearest neighbors distance matrix and GS with $n=2000$.
Using non-default parameters did not lead to significant differences in the results.

\textbf{LES hyperparameters.} In the computation of LES, we set $K=200$ and $\gamma=10^{-8}$ since the simulated data is clean and the eigenvalues decay quickly.
In all experiments, the choice of $\gamma$ was motivated by the magnitude of the smallest eigenvalues.

\subsection{Comparison to Persistence Diagram Metrics\label{appsub:tori_topo}}
As additional examples of the differences between our approach and topological data analysis methods, we compare LES to the bottleneck distance between $H_0$, $H_1$, and $H_2$ persistence diagrams \citep{edelsbrunner2008persistent} on the tori datasets.
We compute the $H_0$, $H_1$, and $H_2$ persistence diagrams \citep{ctralie2018ripser} for samples from each of the $4$ tori and then compute the pairwise bottleneck distances between them. 
We sample only $N=1000$ points from each torus in this example due to the long run times of the persistence diagram computations.
Figure \ref{appfig:tori_bottleneck} presents the average and standard deviation of the distances (avg.\ over 10 trials) as a function of the scale parameter $c$, similarly to \cref{fig:tori_dist}, along with LES for comparison.
While the $H_0$ and $H_1$ bottleneck distances do capture meaningful differences between the tori, as expected, LES distinguishes between dimensionality and scale differences for a wider range of $c$ values.

\begin{figure*}[ht]
\centering
\subfigure{
\includegraphics[width=0.6\textwidth]{figures/Tori/Tori_dist_legend.pdf}
}\vspace{-0.15in}

\setcounter{subfigure}{0}
\subfigure[LES]{
    \includegraphics[width=0.23\textwidth]{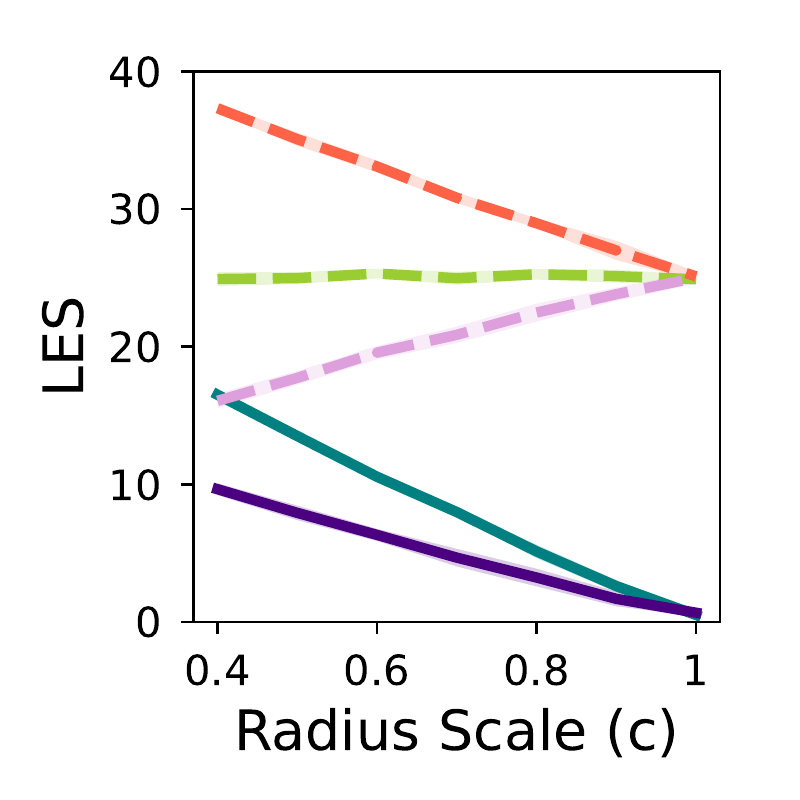}
}
\subfigure[$H_0$, $H_1$ and $H_2$ bottleneck distances]{
    \includegraphics[width=0.23\textwidth]{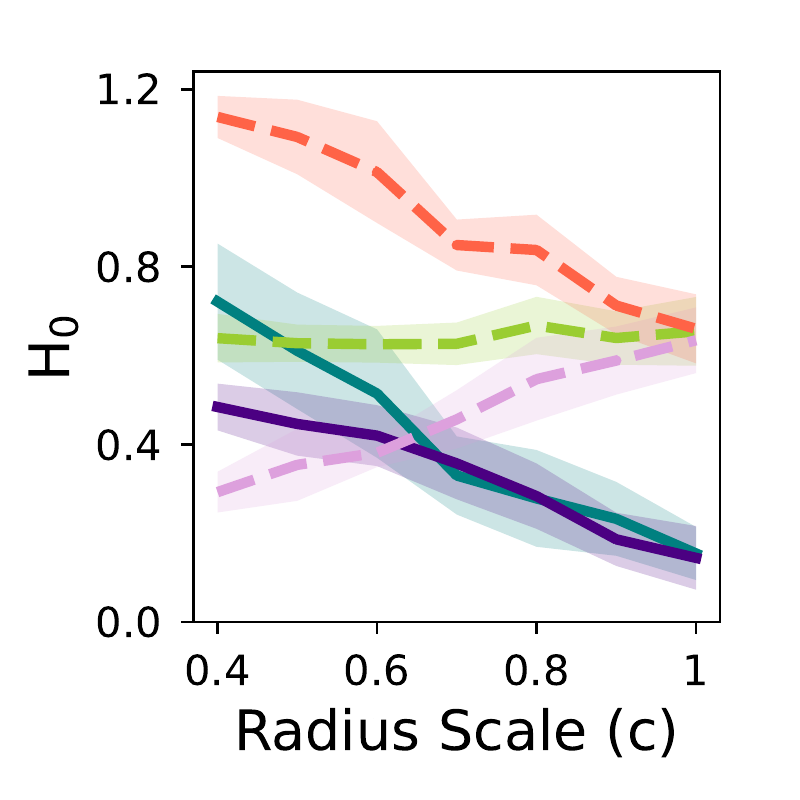}
    \includegraphics[width=0.23\textwidth]{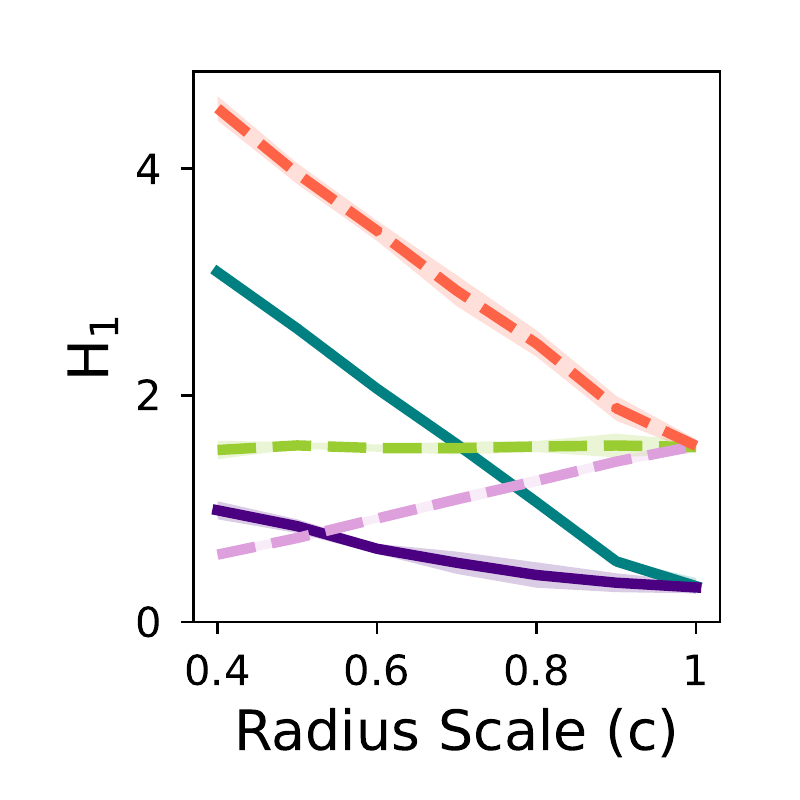}
    \includegraphics[width=0.23\textwidth]{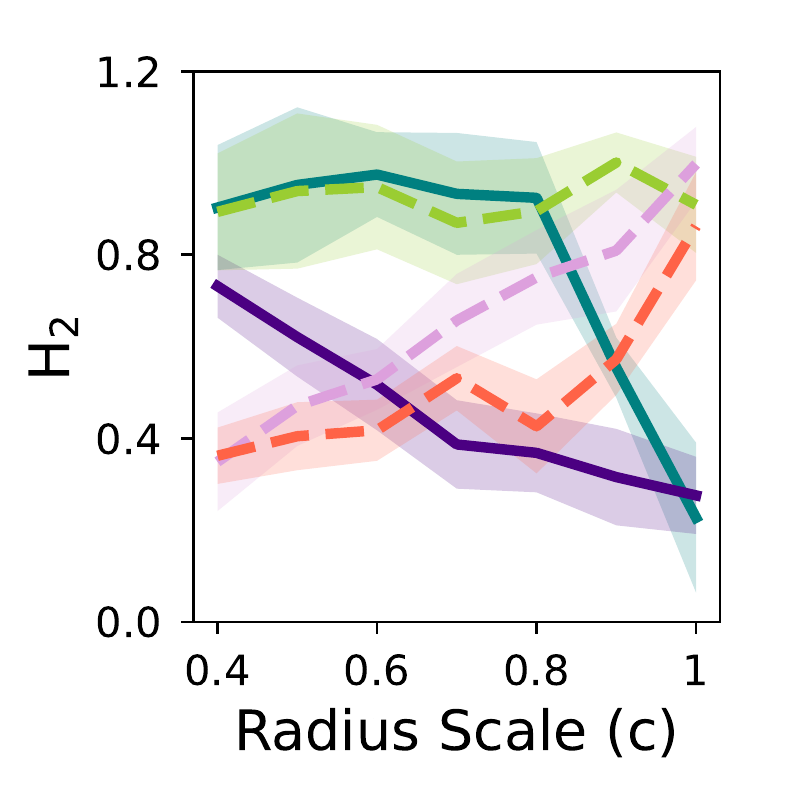}
}
\caption{LES vs. $H_0$, $H_1$ and $H_2$ bottleneck distances on the $2$D and $3$D tori example.\label{appfig:tori_bottleneck}}
\end{figure*}

\subsection{Comparisons to the Log-Euclidean Distance Between Aligned Datasets\label{appsub:tori_addres}}
To evaluate the use of Riemannian SPD geometry in our setting and the relation between the log-Euclidean (LE) distance and LES,
we generated the tori datasets \emph{with} pointwise correspondence by using the same samples of $\theta_1,\ \theta_2$ and $\theta_3$ for all tori.
We constructed the SPD matrix $\mathbf{W}$ \eqref{eq:dm_spd} for each torus and computed the LE distance, the affine-invariant (AI) distance, $d_{AI}\left(\mathbf{W}_1,\mathbf{W}_2\right)=\left\Vert\log\left(\mathbf{W}_1^{-1}\mathbf{W}_2\right)\right\Vert_F$, the Euclidean distance, $d_{EU}\left(\mathbf{W}_1,\mathbf{W}_2\right)=\left\Vert\mathbf{W}_1-\mathbf{W}_2\right\Vert_F$, and the Spectral Gromov-Wasserstein (SpecGW) distance \citep{memoli2011spectral}, $d_{SpecGW}\left(\mathbf{W}_1,\mathbf{W}_2\right)=\sup_{t>0}e^{-(t+t^{-1})}\left\Vert\mathbf{W}_1^t-\mathbf{W}_2^t\right\Vert$ (for known correspondence). All distances were computed using the ground truth correspondence.
To avoid violations of the SPD structure of $\mathbf{W}$ due to numerical issues, we limit the number of samples on the tori to $N=100$ in this comparison.
Recall that LES is a lower bound of the LE distance and that IMD is a lower bound of SpecGW.
\Cref{appfig:tori_le_sgw} presents the average and standard deviation of the computed distances (avg.\ over 10 trials) between the tori for different values of $c$, similarly to \cref{fig:tori_dist}. 
This figure demonstrates that both the LE and AI distances follow a similar trend and are able to distinguish between scale and dimensionality differences for a wider range of $c$ values, compared with the Euclidean and SpecGW distances.
Note that both LES and IMD (our approach), \cref{fig:tori_les,fig:tori_imdours}, approximately follow the same trends as LE and SpecGW.
This justifies our approach for approximating and comparing the eigenvalues of the heat kernel and indicates that our lower bound of the LE distance indeed captures similar structures as the LE distance in this example.
Note that the values of SpecGW are smaller than IMD (our approach) in \cref{fig:tori_imdours}. This may be due to the truncation of the spectrum in IMD (our approach).

\begin{figure*}[ht]
\centering
\subfigure{
\includegraphics[width=0.6\textwidth]{figures/Tori/Tori_dist_legend.pdf}
}\vspace{-0.15in}

\setcounter{subfigure}{0}
\subfigure[LE]{
    \includegraphics[width=0.22\textwidth]{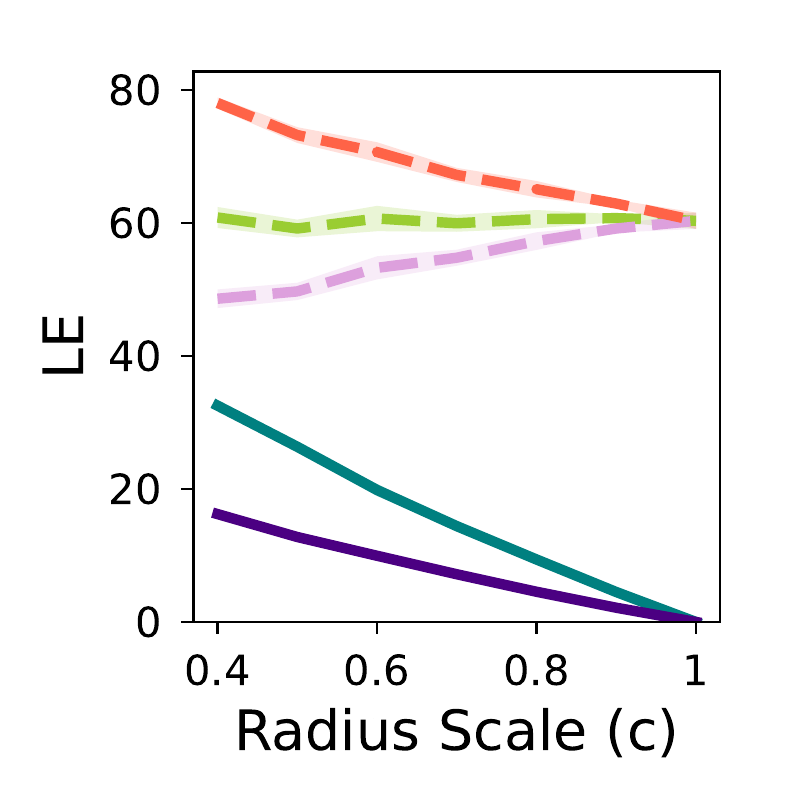}
    \label{appfig:tori_le}
}
\subfigure[AI]{
    \includegraphics[width=0.22\textwidth]{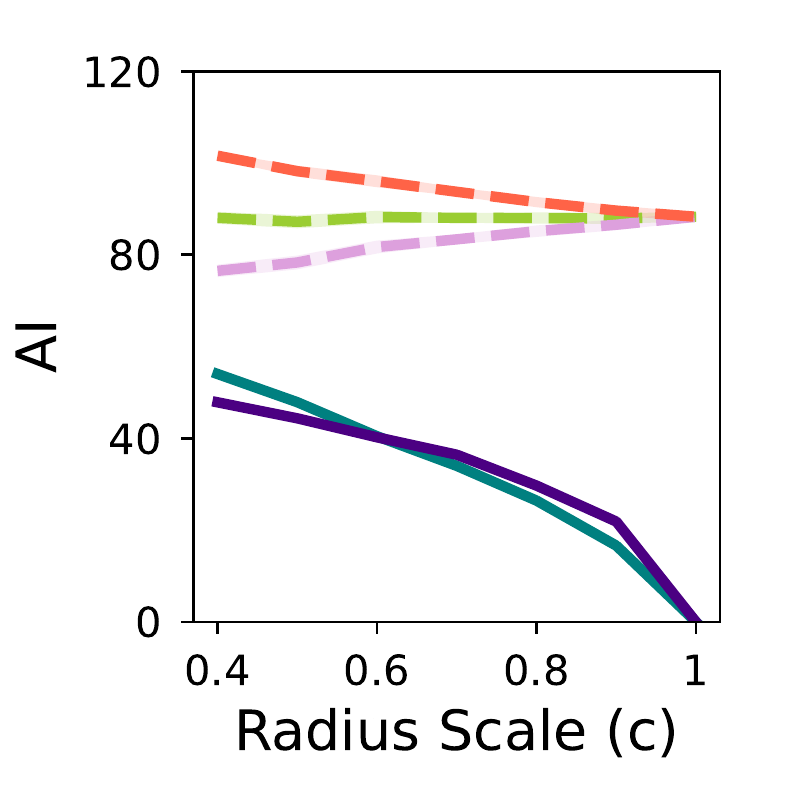}
    \label{appfig:tori_ai}
}
\subfigure[Euclidean]{
    \includegraphics[width=0.22\textwidth]{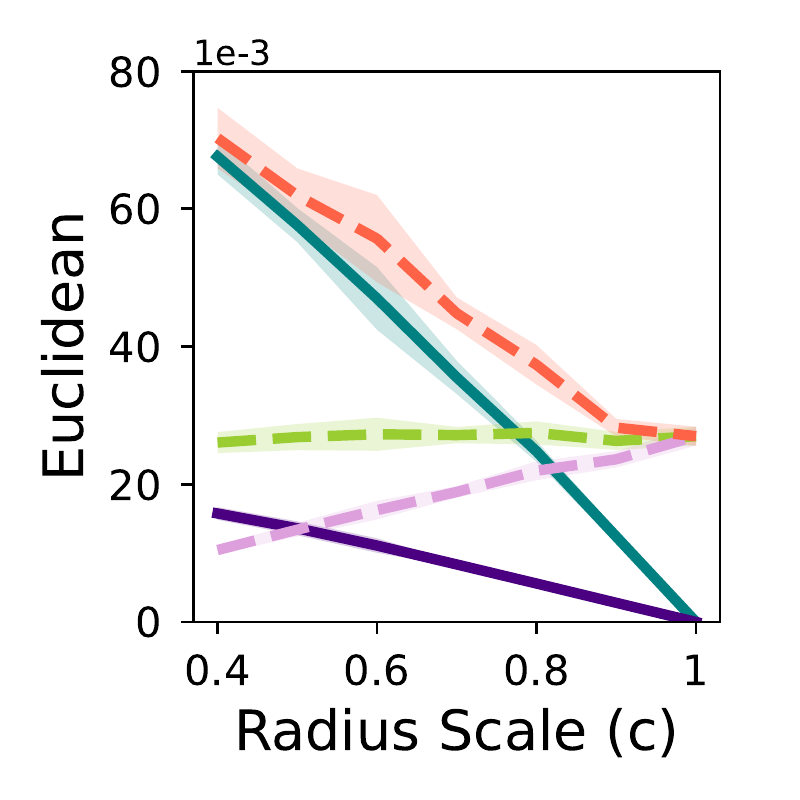}
    \label{appfig:tori_eu}
}
\subfigure[Spectral GW]{
    \includegraphics[width=0.22\textwidth]{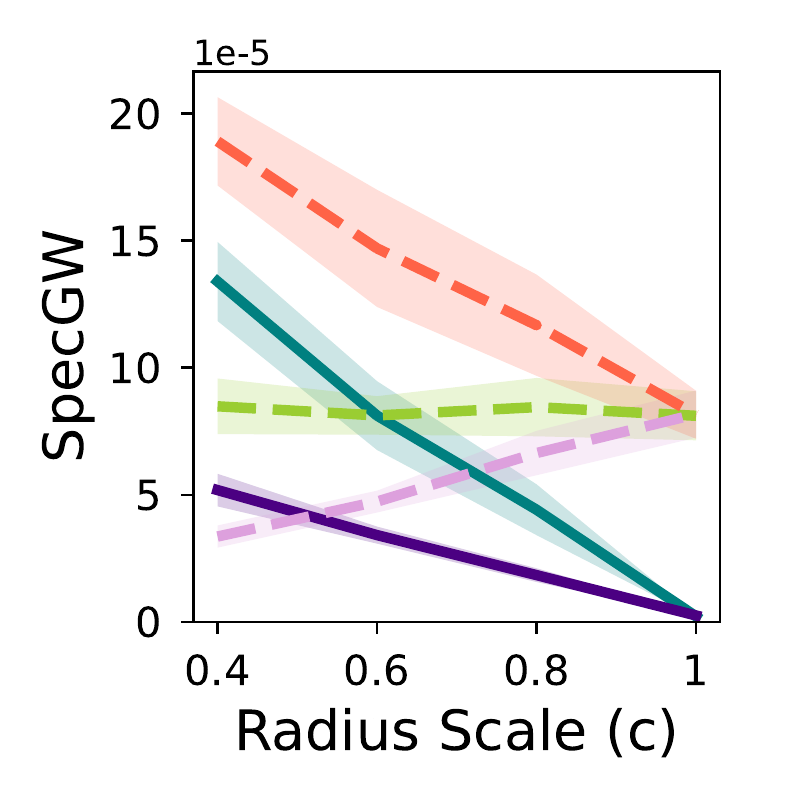}
    \label{appfig:tori_sgw}
}
\caption{LE, AI, Euclidean and SpecGW distances on the $2$D and $3$D tori example using true pointwise correspondence (coupled angles).\label{appfig:tori_le_sgw}}
\end{figure*}

\section{Gene Expression Data: Additional Details\label{app:gene_exp}}
\Cref{appfig:gene_vis} presents the gene space of the scRNA-seq data used in Subsection \ref{sub:exp_genes}, colored according to the time of collection. At each time point, gene expressions from $2000\mbox{--}12000$ cells were collected. 
This figure was created using code provided by \citet{schiebinger2019optimal}, which applies force-directed layout embedding \citep{jacomy2014forceatlas2} for the visualization.

In this experiment, we compare the cell group structures from different time points, treating each group as a separate dataset. 
For LES, we construct the operator $\mathbf{W}_\ell$ at each time point, $\ell=1,\dots,39$, and then compare the descriptors, $F_{LES}^{(\ell)}$, for every pair of time points.

\begin{figure}[ht]
\centering
\includegraphics[width=0.4\textwidth]{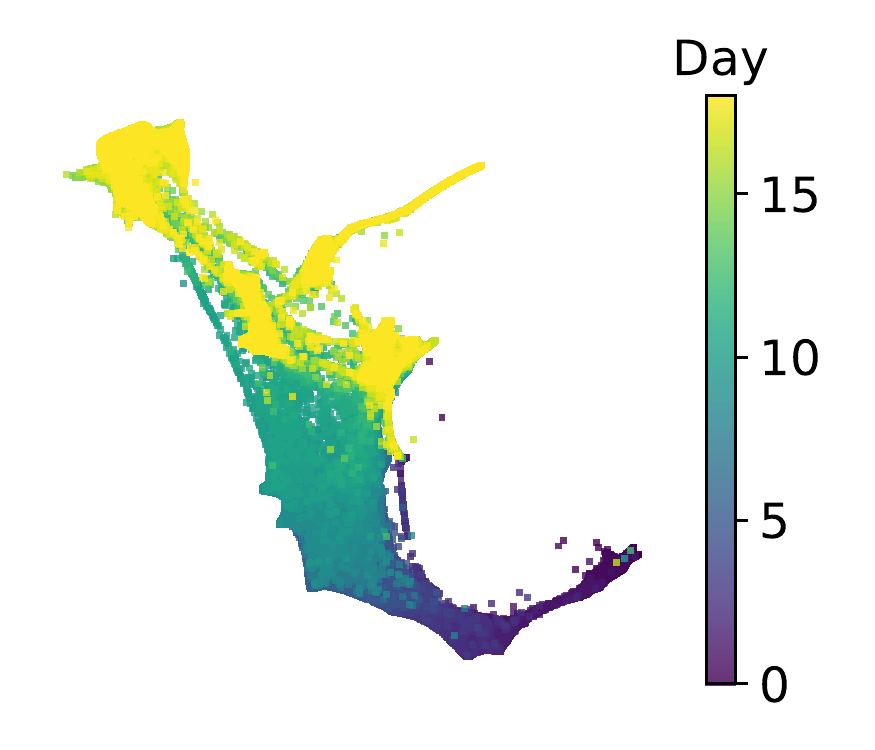}
\caption{Gene expression data visualization \citep{schiebinger2019optimal}.\label{appfig:gene_vis}}
\end{figure}

\textbf{LES hyperparameters.} In the gene expression data we set the LES parameters to $K=500$ and $\gamma=10^{-5}$, which is slightly larger than the maximal $\lambda_{500}$ in all time points, to reduce the effect of noise in the data.

\section{Identifying Difficult Tasks in Few-Shot Learning: Additional Details and Results}
\label{supp:meta-learning}

We present details regarding the experiment described in Section \ref{sec:meta-learning} and summarized in Table \ref{table:meta-mean}.

\textbf{Benchmark datasets.} Both CIFAR-FS \citep{bertinetto2018meta} and FC100 \citep{oreshkin2018tadam} are few-shot learning benchmarks derived from the CIFAR-100 dataset \citep{krizhevsky2009learning}. First the 100 classes are disjointly partitioned intro train, validation, and test. CIFAR-FS partitions the classes randomly, while FC100 utilizes the class similarity information (100 classes are grouped into 20 superclasses in CIFAR-100) to reduce the semantic overlap between train and test classes. Due to the partitioning, FC100 is a more challenging benchmark making it especially interesting for our study. Then the training tasks are obtained by randomly selecting classes and the corresponding images from the set of training classes. Similarly, test tasks are sampled from the set of test classes. All tasks in our experiments are 5-way, i.e. consist of 5 unique classes, following standard practice for these benchmarks.

\textbf{Feature extractor and test accuracies.} For each benchmark and shot setting we train MetaOptNet-SVM following the original implementation \citep{lee2019meta}. Each MetaOptNet network consists of a ResNet-12 feature extractor that is shared across tasks during evaluation and a linear support vector machine classification ``head'' that is trained on $k$ labeled samples, the support (adaptation) data, for each task to compute accuracies, where $k$ denotes the number of ``shots'' per class.
Test accuracies (first row in Table \ref{table:meta-mean}) in each setting are computed using 1000 random test tasks, and we report mean and standard deviation.

\textbf{Evaluating correlations.} To evaluate correlations we first sample $n=10$ tasks from the train task distribution. Corresponding images are then passed through the MetaOptNet feature extractor resulting in a collection of $n$ train task embeddings $\{X^{(\ell)} \in \mathbb{R}^{N_\ell \times d}\}_{\ell=1}^n$, $d=2560$. To obtain a range of new tasks from easier to more difficult ones, we consider mixtures of train and test classes: only test classes (most difficult), 4 test and 1 train classes, 3 test and 2 train classes, 2 test and 3 train classes, 1 test and 4 train classes, and only train classes (easiest). For each of the six combinations we sample 3 tasks, resulting in 18 new tasks for computing the correlations. For each new task we obtain its embeddings, $X' \in \mathbb{R}^{N' \times d}$, compute accuracy using $k$ (number of shots) random labeled examples per class, and compute a measure of dissimilarity with the train tasks, $\mathcal{A}(\{d(X^{(\ell)}, X')\}_{\ell=1}^n)$. For dataset distance computations, $d(\cdot,\cdot)$, we use all available (unlabeled) images, i.e. $N_\ell = N' = N = 3000$ $\forall l$ (600 images per each of the 5 classes in a task). We note that in practice this would require collecting unlabeled examples from a new task to evaluate the dissimilarity, however it is a minor limitation due to easy availability of unlabeled data in many application domains.

For a dataset distance that is informative of the task difficulty, we expect dissimilarity values and accuracies to be negatively correlated, i.e., large distances should indicate low accuracies. An informative dataset distance can be used in practice to anticipate difficult tasks before collecting labels and to prevent deployment of potentially poor-performing models. To compare different dataset distances we report Pearson correlation coefficient between accuracies and dissimilarity measures evaluated on 18 new tasks sampled as previously described. We considered three choices of the aggregation function $\mathcal{A}(\cdot)$: mean (Table \ref{table:meta-mean}), min (Table \ref{table:supp:corr-min-metalearn}), and max (Table \ref{table:supp:corr-max-metalearn}). We repeat each experiment 20 times and report mean and standard deviation of the correlations. Comparing the results across tables, we note that the choice of the aggregation function does not alter the performance much. Our proposed LES is the overall best performing dataset distance regardless of the aggregation function choice.

\paragraph{Run time comparison.} In Table \ref{table:supp:time-metalearn} we summarize run times for computing the dissimilarity measures based on each of the dataset distances considered in this experiment. We report mean and standard deviation over 360 dissimilarity computations (18 tasks for each of the 20 experiment repetitions). IMD and LES are the fastest, with IMD being slightly faster overall. To compute the dissimilarity measure for a new task we need to evaluate each dataset distance $n=10$ times, i.e.\ one evaluation for each of the reference train tasks. LES and IMD are more efficient because they allow pre-computing train task descriptors and reusing them with every new task. Thus, their dissimilarity evaluation run-time is effectively equal to the run-time of computing task descriptor for a new task (the run time of $n=10$ Euclidean distance computations with the pre-computed train task descriptors is negligible). On the other hand, dataset distances such as OT and GW do not have a notion of task descriptors and require $n=10$ respective dataset distance evaluations to compute the dissimilarity measure. While GW is comparable to LES in terms of the correlation scores, its runtime might be problematic in practice.

\paragraph{Why does LES work well?} We present a hypothesis explaining the strong correlation between LES-based dissimilarity measure and task accuracies. MetaOptNet uses linear SVM for classification, and thus its embeddings need to be linearly separable with respect to classes for good performance. In Figure \ref{fig:supp:mds-metalearn} (left), we present 2-dimensional visualization of MetaOptNet embeddings obtained with Multidimensional Scaling \citep{mead1992review} on one of the FC100 5-shot train tasks. While not guaranteed in general without the label information, it is likely that the 5 well-separated clusters we see correspond to the 5 classes from this task. Accuracy on this task is 98.2\%, and we hypothesize that if the embeddings on a new task look similar, i.e. have number of clusters corresponding to the number of classes, then it is likely that the corresponding few-shot accuracy will be high. In Figure \ref{fig:supp:mds-metalearn} (center) and (right) we show embeddings of two new tasks with distinct few-shot accuracies: 66.2\% for the (center) embeddings without cluster structure, and 96.0\% for the (right) embeddings with pronounced cluster structure. Evidently, the geometric structure of the (left) and (right) embeddings is a lot more similar than that of the (left) and (center) embeddings, which is captured by LES, but not IMD. Specifically, LES(left, center) = 188.6 $\gg$ LES(left, right) = 3.5; IMD(left, center) = 3.5 $<$ IMD(left, right) = 12.6. LES takes into account cluster structure when comparing datasets, leading to strong correlation between LES-based dissimilarity and few-shot accuracy in our experiments.

\textbf{LES hyperparameters.} In these experiments we set the LES parameters to $K=500$ and $\gamma=10^{-6}$. Other values of $\gamma$ were tested and did not significantly affect the results.

\begin{table}[t]
\caption{Few-shot learning: MetaOptNet test accuracies and correlations (20 trials) with task dissimilarities using \textbf{min}-aggregation.}
\label{table:supp:corr-min-metalearn}
\begin{center}
\scalebox{.85}{
\begin{tabular}{lcccccc}
\toprule
{} &               FC100 1-shot &               FC100 5-shot &              FC100 10-shot &            CIFAR-FS 1-shot &            CIFAR-FS 5-shot &           CIFAR-FS 10-shot \\
\midrule
Acc. & 38.19$\pm$0.48\%   &    54.45$\pm$0.49\%     &    60.52$\pm$0.48\%     &    70.79$\pm$0.69\%  &   83.98$\pm$0.44\%  &   87.11$\pm$0.40\% \\
\midrule
LES &  \textbf{-0.913}$\pm$0.038 &  \textbf{-0.922}$\pm$0.042 &  \textbf{-0.913}$\pm$0.046 &  \textbf{-0.588}$\pm$0.150 &  \textbf{-0.676}$\pm$0.131 &  \textbf{-0.597}$\pm$0.203 \\
IMD &           -0.356$\pm$0.258 &           -0.399$\pm$0.223 &           -0.359$\pm$0.203 &           -0.116$\pm$0.196 &           -0.011$\pm$0.221 &           -0.044$\pm$0.272 \\
OT  &            0.314$\pm$0.264 &            0.099$\pm$0.253 &            0.077$\pm$0.267 &            0.243$\pm$0.205 &           -0.013$\pm$0.283 &           -0.031$\pm$0.277 \\
GW  &           -0.898$\pm$0.036 &           -0.869$\pm$0.090 &           -0.831$\pm$0.166 &           -0.525$\pm$0.309 &           -0.506$\pm$0.272 &           -0.388$\pm$0.284 \\
GS  &           -0.744$\pm$0.080 &           -0.546$\pm$0.241 &           -0.555$\pm$0.245 &           -0.306$\pm$0.248 &           -0.094$\pm$0.230 &           -0.051$\pm$0.257 \\
\bottomrule
\end{tabular}
}
\end{center}
\end{table}

\begin{table}[t]
\caption{Few-shot learning: MetaOptNet test accuracies and correlations (20 trials) with task dissimilarities using \textbf{max}-aggregation.}
\label{table:supp:corr-max-metalearn}
\begin{center}
\scalebox{.85}{
\begin{tabular}{lcccccc}
\toprule
{} &               FC100 1-shot &               FC100 5-shot &              FC100 10-shot &            CIFAR-FS 1-shot &            CIFAR-FS 5-shot &           CIFAR-FS 10-shot \\
\midrule
Acc. & 38.19$\pm$0.48\%   &    54.45$\pm$0.49\%     &    60.52$\pm$0.48\%     &    70.79$\pm$0.69\%  &   83.98$\pm$0.44\%  &   87.11$\pm$0.40\% \\
\midrule
LES &  \textbf{-0.938}$\pm$0.032 &  \textbf{-0.942}$\pm$0.029 &  \textbf{-0.931}$\pm$0.029 &           -0.663$\pm$0.167 &  \textbf{-0.690}$\pm$0.170 &  \textbf{-0.644}$\pm$0.194 \\
IMD &           -0.119$\pm$0.269 &           -0.298$\pm$0.223 &           -0.178$\pm$0.284 &           -0.129$\pm$0.194 &           -0.026$\pm$0.247 &           -0.028$\pm$0.236 \\
OT  &            0.754$\pm$0.096 &            0.634$\pm$0.124 &            0.621$\pm$0.146 &            0.432$\pm$0.184 &            0.283$\pm$0.231 &            0.184$\pm$0.220 \\
GW  &           -0.924$\pm$0.034 &           -0.915$\pm$0.052 &           -0.908$\pm$0.052 &  \textbf{-0.679}$\pm$0.146 &           -0.660$\pm$0.166 &           -0.610$\pm$0.171 \\
GS  &           -0.705$\pm$0.100 &           -0.517$\pm$0.170 &           -0.517$\pm$0.175 &           -0.407$\pm$0.207 &           -0.073$\pm$0.252 &           -0.005$\pm$0.281 \\
\bottomrule
\end{tabular}
}
\end{center}
\end{table}

\begin{table}[t]
\caption{Few-shot learning: run times (in seconds) for evaluating new task dissimilarity score using $n=10$ train tasks (360 trials).}
\label{table:supp:time-metalearn}
\begin{center}
\scalebox{.85}{
\begin{tabular}{lcccccc}
\toprule
{} &            FC100 1-shot &            FC100 5-shot &           FC100 10-shot &         CIFAR-FS 1-shot &         CIFAR-FS 5-shot &       CIFAR-FS 10-shot \\
\midrule
LES &           23.3$\pm$27.4 &           20.4$\pm$11.8 &  \textbf{19.2}$\pm$11.3 &           26.2$\pm$27.2 &  \textbf{20.8}$\pm$11.6 &          18.8$\pm$10.6 \\
IMD &  \textbf{16.6}$\pm$11.1 &  \textbf{16.1}$\pm$11.5 &           24.1$\pm$28.6 &  \textbf{18.4}$\pm$11.3 &           22.9$\pm$27.7 &  \textbf{16.7}$\pm$9.9 \\
OT  &           29.2$\pm$10.7 &           31.0$\pm$11.1 &           40.3$\pm$41.3 &           31.0$\pm$10.3 &           39.3$\pm$41.6 &          28.2$\pm$10.3 \\
GW  &        1569.3$\pm$480.4 &        1591.8$\pm$474.5 &        1716.2$\pm$435.7 &        1458.9$\pm$438.9 &        1539.8$\pm$448.6 &       1579.0$\pm$453.5 \\
GS  &            96.2$\pm$7.7 &            97.2$\pm$7.9 &            99.2$\pm$6.5 &            99.8$\pm$7.4 &            98.3$\pm$7.5 &           97.6$\pm$8.6 \\
\bottomrule
\end{tabular}
}
\end{center}
\end{table}

\begin{figure}[ht]
\centering
\includegraphics[width=\textwidth]{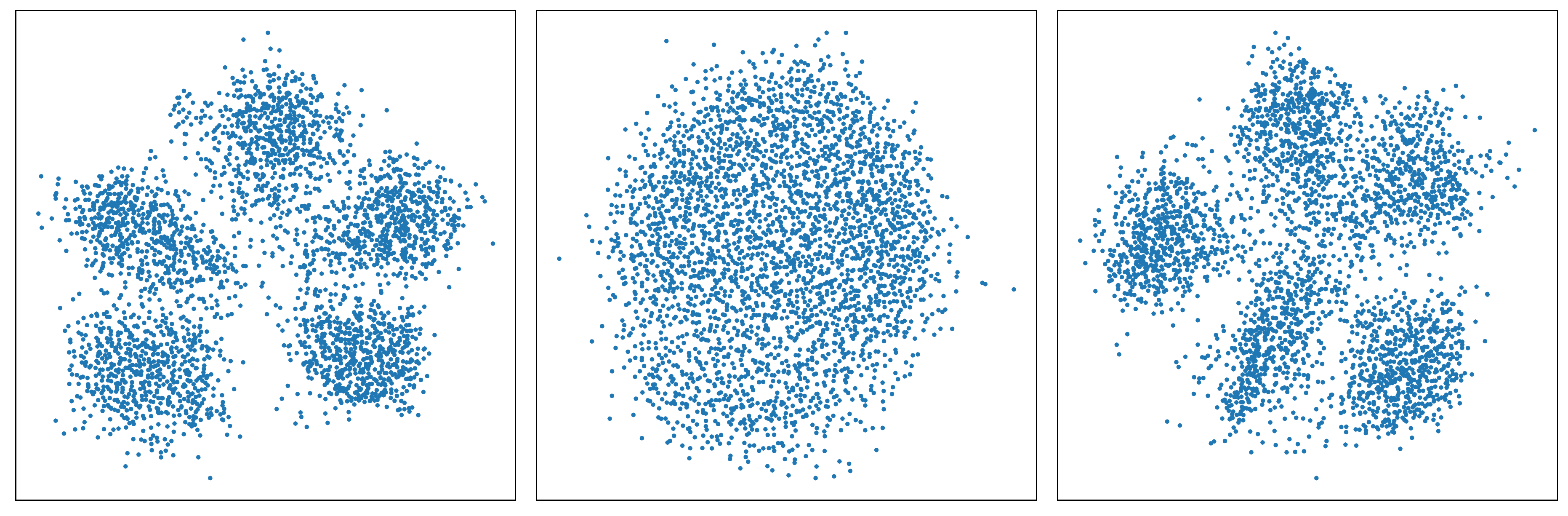}
\vskip -0.2in
\caption{Projected MetaOptNet FC100 5-shot embeddings on one train (left) and two new tasks (center and right). The corresponding few-shot accuracies are 98.2\%, 66.2\%, and 96.0\%. LES(left, center) = 188.6,  LES(left, right) = 0.5; IMD(left, center) = 3.5, IMD(left, right) = 12.6. LES better captures the geometry of the data. Specifically, LES is perceptive of the cluster structure in the embeddings which we hypothesize to be the reason for its success in anticipating difficult tasks in few-shot learning.}
\label{fig:supp:mds-metalearn}
\end{figure}

\section{Matching Neural Network Layers}\label{app:layers}
In this experiment, we demonstrate the applicability of LES to measuring similarity of NN hidden representations, inspired by \citet{kornblith2019similarity}.

The hidden representations of NN are intrinsically difficult to compare, since they can differ in dimensionality and may be invariant to a variety of transformations, such as permutation of the hidden units.
Therefore, the intrinsic properties of the LES distance facilitate the comparison of different layer embeddings.

We train a LeNet convolutional NN on the MNIST training set using two different random seeds (seed 0 and seed 1).
The MNIST test set is then passed through the two trained NN and the resulting layer embeddings at the $i$th layer are saved as $X_i^{(0)}$ and $X_i^{(1)}$, where $i \in \{0,\dots, 4\}$. Here, $i=0$ denotes the input, and we omit the output of the last layer. 
We compute the pairwise distances between layers of the two NN, $d(X_i^{(0)},X_j^{(1)})\ \ \forall i,j\in \{0,\dots, 4\}$, using LES (with $K=200$ and $\gamma=10^{-8}$) and IMD.
We denote these distance matrices by $\mathbf{D}^{\mathrm{LES}}$ and $\mathbf{D}^{\mathrm{IMD}}$.
Due to the similar structure of the two NNs, we expect that hidden representations of the layers will be consistent across the random initializations, resulting in a distance matrix with minimal diagonal values.
Indeed, $\mathbf{D}^{\mathrm{LES}}$ depicts this expected behavior, whereas in $\mathbf{D}^{\mathrm{IMD}}$ some of the off-diagonal entries are smaller than the diagonal entries.
To demonstrate this different behavior, we define the margin factor $\alpha_\delta$ for each layer shift $\delta\in[1,4]$ by 
$$ \alpha_\delta = \frac{\min_{j,k\ s.t.\ \vert k-j\vert=\delta} D_{jk}}{\max_i D_{ii}}$$
where $D_{jk}$ denotes the $(j,k)$th entry in either $\mathbf{D}^{\mathrm{LES}}$ or $\mathbf{D}^{\mathrm{IMD}}$, and we set $\alpha_0=1$.
Namely, we compute the ratio between the minimal distance of layers shifted by $\delta$ and the maximal distance of layers from the same level in the two random initializations.
Figure \ref{appfig:margin} presents the $\log$ of the margin factor for LES and IMD as a function of the layer shift, $\delta$.
The log-margin factor computed based on the LES distance exhibits the expected behavior (the log-margin factor is greater than $0$ ($\log(1)$) for all layer shifts, $\delta$, and increases as $\vert\delta\vert$ grows), whereas the IMD plot dips below $0$ for $\delta=\pm 1$.
Both methods have margin curves that monotonically increase away from $\delta=\pm 1$, indicating that nearby layers are significantly more similar than layers further apart.

\begin{figure}[ht]
\centering
\subfigure[LES]{
    \includegraphics[width=0.25\textwidth]{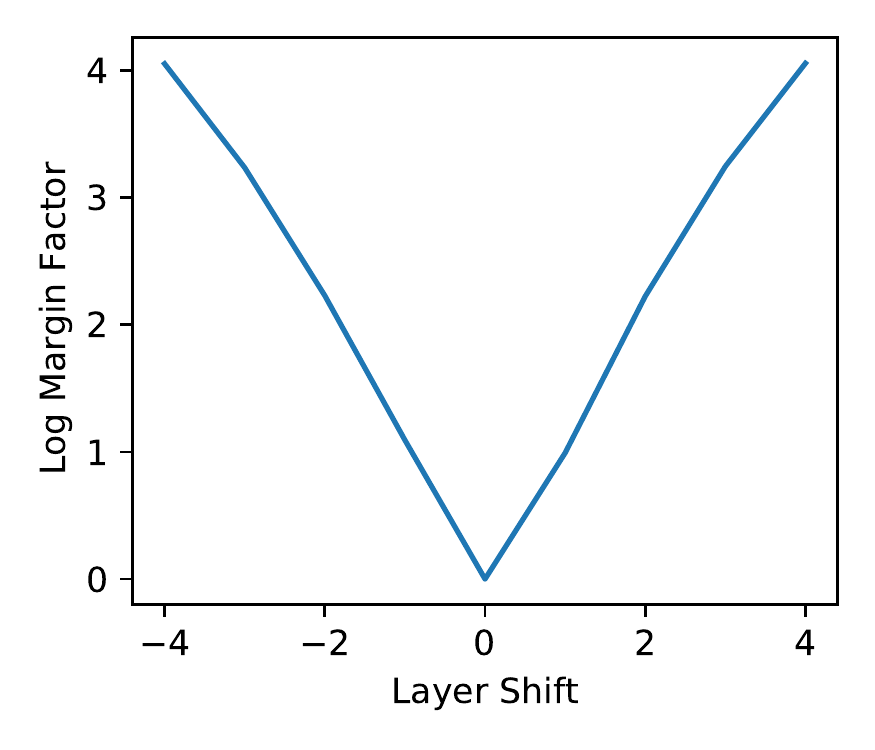}
    \label{appfig:Margin_le}
}
\hspace{-0.2in}
\subfigure[IMD]{
    \includegraphics[width=0.25\textwidth]{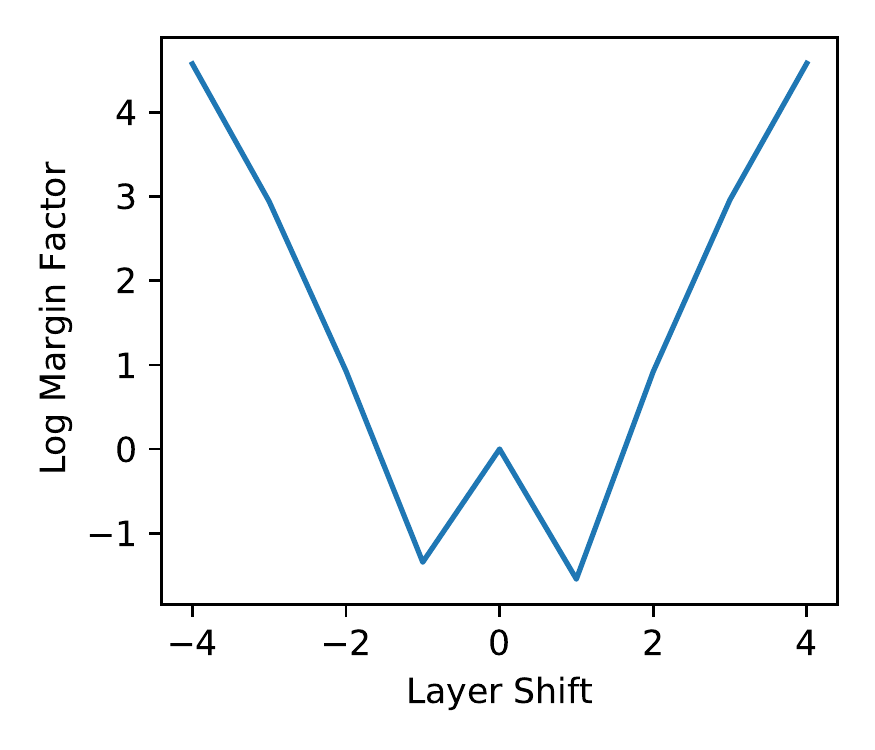}
    \label{appfig:Margin_IMD}
}
\caption{Matching neural network layers for MNIST across random seeds.
\label{appfig:margin}}
\end{figure}

To validate these results on a larger scale, we replicate the first experiment in \citet{kornblith2019similarity}. 
We train $10$ instances of an All-CNN-C convolutional NN \citep{springenberg2014striving} with $10$ layers (including the logits layer) on CIFAR-10, with different random initializations, reaching average train and test accuracies of $85\%$ and $80\%$, respectively.
We compute the test set embeddings of each layer in each NN instance, and evaluate the accuracy of finding corresponding layers in all other networks based on the minimal distance (maximum similarity) between them.
We treat each layer embedding as a separate dataset and compute distances between these datasets using LES, IMD and centered kernel alignment (CKA) \citep{kornblith2019similarity}, a similarity measure that is closely related to canonical correlation analysis and requires known correspondence between embeddings, i.e. the same input to all NN, unlike LES and IMD.
Table \ref{table:supp:layer_class} presents the layer classification accuracies for the compared distance and similarity measures in two scenarios, ``Identical inputs'', in which the layer embeddings of all NN were computed using the same $N=10000$ test samples, and ``Different Inputs'', in which the layer embeddings of each NN were computed based on a random subset of $N\in[7000,10000]$ test samples.
In the latter scenario, the correspondence between the different inputs is unknown, and therefore, CKA cannot be computed.
Table \ref{table:supp:layer_class} depicts that LES is capable of correctly identifying similarities of layer embeddings, obtaining accuracies that are close to CKA even with unknown input correspondence.

\begin{table}[t]
\caption{Layer position classification accuracy.}
\label{table:supp:layer_class}
\begin{center}
\begin{tabular}{lcc}
\toprule
{} &            Identical inputs &            Different inputs\\
\midrule
LES &           96\% &                        95.8\%\\
IMD &           85.2\% &                      81\%\\
CKA  &          97.3\% &                      -\\
\bottomrule
\end{tabular}
\end{center}
\end{table}

These experiments serve as a sanity-check for LES's applicability to such tasks and demonstrate that LES is successful at ensuring both that corresponding layers are assigned a small distance, and that distances between non-corresponding layers are large. 
The results highlight the potential of LES for analyzing and understanding different NN architectures, even when trained on different datasets.

\end{document}